\documentclass[conference,letterpaper]{IEEEtran}

\usepackage[utf8]{inputenc} 
\usepackage[T1]{fontenc}
\usepackage{url}
\usepackage{ifthen}

\usepackage[cmex10]{amsmath} 

\usepackage{graphicx}

\usepackage{tikz}
\usetikzlibrary{positioning, calc}

\usepackage{amsmath,amssymb,amsfonts}
\usepackage{amsthm}

\newcommand{\R}{\mathbb{R}}
\newcommand{\E}{\mathbb{E}}

\usepackage{xcolor}

\newtheorem{theorem}{Theorem}
\newtheorem{lemma}[theorem]{Lemma}
\newtheorem{proposition}[theorem]{Proposition}
\newtheorem{corollary}[theorem]{Corollary}
\newtheorem{definition}[theorem]{Definition}
\newtheorem{remark}[theorem]{Remark}

\newtheorem{assumption}[theorem]{Assumption}

\begin{document}
\title{A Hierarchical Sampling Framework for bounding the Generalization Error of Federated Learning}

\author{
  \IEEEauthorblockN{Dario Filatrella, Ragnar Thobaben, and Mikael Skoglund}
  \IEEEauthorblockA{\\
                    Department of Information Science and Engineering \\
                    Royal Institute of Technology \\
                    Stockholm, Sweden \\
                    Email: \{dariofi, ragnart, skoglund\}@kth.se}
}

\maketitle

\begin{abstract}
    We study expected generalization bounds for the Hierarchical Federated Learning (HFL) setup using Wasserstein distance. 
    We introduce a generalized framework in which data is sampled hierarchically, and we model it with a multi-layered tree structure that induces dependencies among the clients' datasets.
    We derive generalization bounds in terms of Wasserstein distance under the Lipschitz assumption on the loss function, by applying a supersample construction that allows us to measure
    the sensitivity of the algorithm to the change of a single node in the sampling tree. By leveraging the FL structure, we recover and strictly imply existing state-of-the-art conditional
    mutual information (CMI) bounds in the case of bounded losses. We also show that our bound can be applied together with Differential Privacy assumptions, to recover 
    generalization bounds based on algorithmic privacy.
    To assess the tightness of our bounds, we study the Gaussian Location Model (GLM) and show that we recover the actual asymptotic rate of the generalization error .
\end{abstract}

\section{Introduction}

Federated Learning (FL) is a machine learning setting where multiple clients (or nodes) collaboratively train a global model \cite{kairouzAdvancesOpenProblems2021}. 
The term was introduced in 2016 by \cite{mcmahanCommunicationEfficientLearningDeep2023}, where the authors present the challenges and motivations of training models in a distributed manner, 
and propose the FedAvg algorithm.

The two main motivations to train models in a distributed fashion are: (i) to distribute training resources for models that are exceptionally large and (ii) to distribute
datasets in the case of privacy concerns or communication constraints.

Motivation (i) is an engineering driven motivation, and for further details, we refer to \cite{zhangDeepLearningElastic2015} and \cite{poveyParallelTrainingDNNs2015}, 
which provide frameworks for training DNNs in a distributed manner. 
Motivation (ii) is what we aim to address with the random participation of clients. 
The idea of having a complex hierarchical data source, with non i.i.d. points, first appeared in classical statistics, specifically in the field of sampling techniques, when, in 1965, Kish introduced the concept of \textit{design effect} 
and \textit{effective sample size} \cite{kish1965survey}. There exist many classical references on sampling techniques, such as \cite{cochranSamplingTechniques1977}.
These results predate the modern learning theory framework, and aim to measure the heterogeneity of the data through variance-like quantities 
and indicators, such as the \textit{intraclass correlation coefficient}, which are not compatible with modern learning theory approaches based on complexity measures and information theory.

While generalization bounds for supervised learning (SL) could be applied, as the FL setup can be seen as a "black box" SL setup, 
they would lose the leverage of the FL structure, and, in some cases, they might also incur in clashing assumptions due to the non-i.i.d. nature of the data. 
Despite those reasons calling for the development of generalization bounds tailored for FL, the literature 
on this topic remains in its early stages.

In our work, we primarily address motivation (ii), leaving flexibility to incorporate constraints about the computational power of each node.
Our main contributions are: 
\begin{itemize}
    \item A generalized framework that encompasses hierarchical sampling strategies and previous Federated Learning frameworks as special cases.
    \item A generalization bound in terms of Wasserstein distance that holds for a wide range of loss functions and, in particular, is tighter than previous state-of-the-art CMI bounds for bounded losses.
    \item A generalization bound under privacy assumptions on the local algorithms.
    \item A case study of the Gaussian Location Model (GLM), where we show how our bound compares to the known generalization error.
\end{itemize}

The remainder of the paper is organized as follows:.
Section \ref{section:related_work} reviews the related literature on FL and generalization bounds. Section \ref{section:preliminaries} 
introduces the notation and our framework. Section \ref{section:main_results} presents our main results, 
including both Wasserstein distance bounds, Differential Privacy bounds, and a discussion on the implications of our results. 
Then, Section \ref{section:gaussian_location_model} presents the case study of the GLM.
Finally, we include the proofs of every result and additional discussions in the appendix.

\section{Related work}
\label{section:related_work}

\subsection{Generalization theory}

The literature for generalization bounds in the centralized setting is extensive, and dates back to the 70s and 80s
starting with the PAC learning framework \cite{valiantTheoryLearnable1984} and the Vapnik-Chervonenkis dimension \cite{vapnik2015uniform}.
Those approaches are based on the complexity of the hypothesis class, and, thanks to \cite{zhangUnderstandingDeepLearning2017},
 we know that those classical complexity measures are not sufficient to explain the generalization of modern deep learning models, where the number of parameters is often much larger than the number of training samples
 and the learning algorithms play a key role in the generalization performance.
Subsequently, a new theory based on information measures was developed, 
with results such as \cite{xuInformationtheoreticAnalysisGeneralization2017}, where the authors established a bound in terms of information measures between the data and the output hypothesis of the learning algorithm,
and has been improved numerous times, for example, in \cite{buTighteningMutualInformation2020}.

In 2020 \cite{steinkeReasoningGeneralizationConditional} introduced the conditional mutual information (CMI) framework and the supersample construction, which we build upon in this work.
This approach has yielded tighter bounds and has been extended to a larger class of losses using $f$-divergences and $phi$-divergences
\cite{harutyunyanInformationtheoreticGeneralizationBounds2021} and \cite{hellstromGeneralizationBoundsInformation2020}.

Of particular interest for our work is \cite{rodriguez-galvezTighterExpectedGeneralization2022} which introduced a new approach
based on Wasserstein distance, which implies the CMI bound as a special case for bounded losses.
The Wasserstein distance is particularly interesting for its connection with optimal transportation and its ability to capture the geometry of the distributions
\cite{santambrogioOptimalTransportApplied}.

\subsection{Federated Learning}

Due to the recency of FL, with the first ideas of training local models and averaging them appearing in \cite{mcdonaldDistributedTrainingStrategies} for the perceptron model,
studies of the generalization performance of FL algorithms started to emerge in the literature only recently \cite{yuanWhatWeMean2022}, with the introduction of the concept of 
\textbf{participation gap} and \textbf{Out-of-sample gap}, which provide the foundations for our contributions. 
In this early work, both theoretical bounds and empirical measurements are provided for the two gaps, using common image classification datasets.
Following this work, \cite{kimFedHBHierarchicalBayesian2023} introduces a hierarchical Bayesian structure with personalized models for each
client. They prove a convergence rate and a generalization bound for a locally convex model, with the latter being a very strong limitation.

An important aspect of FL is the communication structure, often summarized by the number of communication rounds (i.e. the number of times the nodes are allowed to exchange information).
In the special case of the \textit{single round} or \textit{one-shot} setting, it is possible to derive specialized generalization bounds \cite{peiReviewFederatedLearning2024,zhangImprovingGeneralizationFederated}.

In \cite{barnesImprovedInformationTheoretic2022} the authors introduce a mutual information bound for generic FL algorithms and then specialize it
to the case of models that can be represented as Bregman divergences, including a bound with communication constraints and multi-round algorithms.

In \cite{kavianHeterogeneityMattersEven2025} the authors show that the generalization errors is lower when clients data sources are more heterogeneous, in the case of averaging local models, 
but their limitations are related to the lack of leverage of the FL structure, which we discuss further when showing how to recover their bounds as a special case of our framework.

Overall, there is a lack of unified frameworks for FL that can ecompass different sampling strategies and communication structures.

\section{Preliminaries}
\label{section:preliminaries}

\subsection{Notation}
We use capital letters for random variables (e.g., $X$) and lowercase letters for their realizations (e.g., $x$).
We denote the distribution of a random variable $X$ as $P_X$, and the conditional distribution of $Y$ given $X$ as $P_{Y|X}$. 
We denote the Wasserstein-1 distance between two distributions $P$ and $Q$ as $\mathbb{W}(P, Q)$ (unless specified otherwise, we will always refer to the Wasserstein-1 distance).
We use the notation $i_{a:b}$ to denote the sequence of indices from $a$ to $b$, i.e., $i_a, i_{a+1}, \ldots, i_b$. 
When writing $\sum_{i_l^{1:l}}$ we mean the sum over all valid indices of "length" $l$, as understood from context.
Whenever $x_i$ is a sequence of object, we denote $x$ to be the ordered collection of all the objects and $|x|$ to be the number of objects in the sequence.
When writing $P \circ Q$ we mean the composition of the kernels, while $P \otimes Q$ means the product distribution. We use $P^{\otimes n}$ to denote the product distribution of $n$ independent copies of $P$.

\subsection{Hierarchical Federated Learning setup}

The Hierarchical Federated Learning setup is a supervised learning problem, where $\mathcal{Z}$ is the data space, $\mathcal{W}$ is the hypothesis space, 
and $\ell: \mathcal{W} \times \mathcal{Z} \to \R$ is the loss function.
We denote the dataset by $\mu$ and the learning algorithm by $P_{W|\mu}$, which is a (not necessarily) random kernel that maps the dataset to a distribution over hypotheses.
The federated nature of the setup is modeled by (i) the sampling structure of the dataset $\mu$ and (ii) the assumptions on the algorithm $P_{W|\mu}$.
In particular, our dataset $\mu$ is indexed by a tree with a fixed topology that represents a hierarchical sampling strategy, hence the name.
Given the constraints, we can index the nodes of the tree by their depth and position in the tree as follows: each node is indexed as 
$\mu_l^{i_{1:l}}$ where $l$ is the depth of the node and $i_{1:l}$ is the position of the node in the tree as a sequence of positions
of its ancestors (i.e. $i_{1}, i_{1:2}, \ldots, i_{1:l-1}$ are the ancestors of $i_{1:l}$). We also define $L$ to be the depth of 
the tree.

\begin{figure}[ht]
    \centering
     \resizebox{\columnwidth}{!}{%
    \begin{tikzpicture}[
    grow=down,
    level 1/.style={level distance=16mm, sibling distance=70mm},
    level 2/.style={level distance=18mm, sibling distance=16mm},
    every node/.style={draw, circle, inner sep=1.2pt, minimum size=8mm, font=\small},
    dot/.style={draw=none, circle=none, font=\small},
    edge from parent/.style={draw},
    ]
    \node (D) {$D$}
    child { node (m11) {$\mu_1^{1}$}
        child { node (m211) {$\mu_2^{1,1}$} }
        child { node (m212) {$\mu_2^{1,2}$} }
        child { node (m21n) {$\mu_2^{1,n_2}$} }
    }
    child { node (m12) {$\mu_1^{2}$}
        child { node (m221) {$\mu_2^{2,1}$} }
        child { node (m222) {$\mu_2^{2,2}$} }
        child { node (m22n) {$\mu_2^{2,n_2}$} }
    }
    child { node (m1n) {$\mu_1^{n_1}$}
        child { node[dot] {$\vdots$} } 
    };

    \path (D) -- (m12) coordinate[pos=0.55] (e12);
    \path (D) -- (m1n) coordinate[pos=0.55] (e1n);
    \node[dot] at ($(e12)!0.5!(e1n)$) {$\cdots$};

    \path (m11) -- (m212) coordinate[pos=0.55] (f12);
    \path (m11) -- (m21n) coordinate[pos=0.55] (f1n);
    \node[dot] at ($(f12)!0.5!(f1n)$) {$\cdots$};

    \path (m12) -- (m222) coordinate[pos=0.55] (g22);
    \path (m12) -- (m22n) coordinate[pos=0.55] (g2n);
    \node[dot] at ($(g22)!0.5!(g2n)$) {$\cdots$};

    \end{tikzpicture}
    }
    \caption{Hierarchical sampling structure}
\end{figure}

Formally the sampling is as follows:
the first layer's nodes are sampled i.i.d. from a meta-distribution $D$, meaning $\mu_1^{i_1} \sim D$ for all $i_1$. For the lower layers we have 
\begin{equation}
    \mu_l^{i_{1:l}} \sim \mu_{l-1}^{i_{1:l-1}}.
\end{equation}

Consequently, $\mu_L^{i_{1:L}} \in \mathcal{Z}$ is a data point, $\mu_{L-1}^{i_{1:L-1}}$ is a distribution over data points, and so on.
We can then write the sampling of a single data point as $\mu_L \sim \mu_{L-1} \sim \cdots \sim \mu_1 \sim D$, where $\mu_l$ is a random variable that takes values in the set of nodes at depth $l$.
For simplicity we will also shorten the previous notation to $\mu_L \sim \mu_1$ to indicate the whole chain of sampling from $L$ to $1$.

The intuitive motivation behind this dataset construction is the following: the leaves represent the actual data points, while the intermediate nodes represent 
clients or clusters, which are also assumed to be random, and in this setting they are indentified as distributions over the lower nodes (i.e. a cluster is a distribution over clients, a client is a distribution over data points).

Regarding the algorithmic assumptions, our main results hold for a general kernel $P_{W|\mu}$ that takes as input the whole tree without imposing any constraint (such as privacy mechanisms or communication constraints).
Of course, such bounds can only get tighter when restricting to a smaller class of algorithms, but we are still able to recover the bounds of \cite{wangGeneralizationFederatedLearning2025} and \cite{kavianHeterogeneityMattersEven2025}.
Their constraints consist of imposing each client to only communicate an hypothesis $W_l^{i_{1:l}} \in \mathcal{W}$ to the parent node, which will
then only act as an \textit{aggregation rule}, which can be modeled by a Markov chain.
In section \ref{section:privacy} we will explore such setting, and derive generalization bounds under privacy assumptions.

For technical reasons discussed in the proof of Theorem \ref{prop:gen_bound_decomposition}, we also assume constant branching factor at each layer,
i.e. we assume that at layer $l$ each node $\mu_l^{i_{1:l}}$ has $n_{l+1}$ children $\mu_{l+1}^{i_{1:l+1}}$. We can then denote $N_l = \prod_{k=1}^l n_k$ as the number of nodes at depth $l$.

\begin{remark}
    \label{remark:independence_assumption_middle_nodes}
    While the hypothesis $W$ clearly depends on the intermediate nodes $\mu_l^{i_{1:l}}$ for $l < L$, those nodes are usually an abstraction (such as the type of client) and they are
    not real data points that are used by the algorithm, which only uses leaf nodes.
    We can then model this with the following independence assumption:
    \begin{equation}
        W \perp \mu_l^{i_{1:l}} \mid \left\{ \mu_{l+1}^{i_{1:l+1}} \right\}_{i_{l+1}^{l+1} = 1}^{n_{l+1}} .
    \end{equation}
    We do not require this for our main results, but it is often implicitely assumed in the literature, for example in \cite{kavianHeterogeneityMattersEven2025,zhangImprovingGeneralizationFederated}.
\end{remark}

\begin{remark}
    \label{remark:positionally_dependant_sampling}
    With a slight modification we can assume that the sampling is dependent on the indices of the nodes, i.e., $\mu_l^{i_{1:l}} \sim K(\mu_{l-1}^{i_{1:l-1}}, i_{1:l})$, where $K$ is a kernel.
    In this case, the fixed depth becomes a non-real constraint, since we can just impose a certain node to output a deterministic copy until a leaf node is reached.
    Having positionally dependent kernel will make it possible to recover results from \cite{kavianHeterogeneityMattersEven2025}, but it is reasonable to assume symmetry among the nodes and 
    invariance of the learning algorithm to the ordering of the data points.
\end{remark}

\subsection{Generalization Error}

Given a realization of the dataset $\mu$, the empirical risk of a hypothesis $w$ is defined as
\begin{equation}
    \hat{L}(w, \mu) = \frac{1}{N_L} \sum_{i_{1:L}} \ell(w, \mu_L^{i_{1:L}}),
\end{equation}
and the population risk is defined as
\begin{equation}
    L(w) = \E_{\mu_L \sim \mu_{L-1} \sim \cdots \sim \mu_1 \sim D}[\ell(w, \mu_L)].
\end{equation}
The generalization error of an algorithm $P_{W|\mu}$ is defined then as:
\begin{equation}
    \label{eq:gen_error_definition}
    \text{gen}(P_{W|\mu}) = \E_{W \sim P_{W|\mu}, \mu_L \sim \mu_{L-1} \sim \cdots \sim \mu_1 \sim D}[L(W) - \hat{L}(W, \mu)].
\end{equation}

\begin{proposition}
    \label{prop:gen_bound_decomposition}
    [Generalization bound decomposition]
    Under the above setup, the generalization error can be decomposed as

\begin{IEEEeqnarray}{rCl}
\Delta_{l,i_{1:l}}
&&:=
\E_{W,\mu}\E_{\mu_L^{\mathrm{test}} \sim \mu_l^{i_{1:l}}}
\!\left[\ell\!\left(W,\mu_L^{\mathrm{test}}\right)\right]
\nonumber\\
&&\quad
- \frac{1}{n_{l+1}} \sum_{i_{l+1}}
\E_{W,\mu}\E_{\mu_L^{\mathrm{test}} \sim \mu_{l+1}^{i_{1:l+1}}}
\!\left[\ell\!\left(W,\mu_L^{\mathrm{test}}\right)\right],
\end{IEEEeqnarray}

\begin{IEEEeqnarray}{rCl}
\bigl|\operatorname{gen}(P_{W|\mu})\bigr|
&\leq&
\sum_{l=0}^{L-1} \frac{1}{N_l} \sum_{i_{1:l}}
\bigl|\Delta_{l,i_{1:l}}\bigr|.
\end{IEEEeqnarray}

Where $\mu^\mathrm{test}_L \sim \mu_l^{i_{1:l}}$ is an independent test point with the same distribution as the leaf nodes rooted in $\mu_l^{i_{1:l}}$.

\end{proposition}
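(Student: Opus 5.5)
The proof is a telescoping argument over the depths of the sampling tree, followed by the triangle inequality. For each depth $l\in\{0,\dots,L\}$ define the averaged risk
\[
A_l := \frac{1}{N_l}\sum_{i_{1:l}} \E_{W,\mu}\E_{\mu_L^{\mathrm{test}}\sim \mu_l^{i_{1:l}}}\left[\ell\left(W,\mu_L^{\mathrm{test}}\right)\right],
\]
with the conventions $N_0=1$ and $\mu_0 = D$, so that the empty index sequence refers to the root.

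We first identify the two endpoints of the chain.

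At $l=0$, the test point is sampled from the root through the whole chain $\mu_L\sim\cdots\sim\mu_1\sim D$, independently of the realized tree $\mu$. Since $W$ depends only on $\mu$, this gives $A_0=\E_W[L(W)]$, the population risk term in \eqref{eq:gen_error_definition}.

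At $l=L$, the node $\mu_L^{i_{1:L}}$ is itself a data point. We interpret ``sampling from a leaf'' as the Dirac mass at that leaf, i.e. $\mu_L^{\mathrm{test}}=\mu_L^{i_{1:L}}$. Then $A_L=\E\left[\hat L(W,\mu)\right]$.

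With the endpoints fixed, we telescope:
\[
\text{gen}(P_{W|\mu}) = A_0 - A_L = \sum_{l=0}^{L-1}(A_l - A_{l+1}).
\]

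The key step is to rewrite each increment $A_l-A_{l+1}$ as an average of the $\Delta_{l,i_{1:l}}$. This is where the constant branching factor is used. Because $N_{l+1}=N_l\, n_{l+1}$, we can write
\[
\frac{1}{N_{l+1}}\sum_{i_{1:l+1}} = \frac{1}{N_l}\sum_{i_{1:l}}\,\frac{1}{n_{l+1}}\sum_{i_{l+1}},
\]
so that
\[
A_l - A_{l+1} = \frac{1}{N_l}\sum_{i_{1:l}}\Delta_{l,i_{1:l}}.
\]
Without equal branching, the uniform average over leaves would not factor as a nested average over children. In that case one would need weights other than $1/N_l$, which explains the assumption mentioned in the setup.

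Finally, taking absolute values and applying the triangle inequality twice, once over $l$ and once over $i_{1:l}$, yields the claimed bound.

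The only delicate point is the rigorous handling of the two boundary identifications, since all integrability needed is implicit in $\text{gen}$ being finite. One must check that in the $l=0$ term the fresh test point is genuinely independent of $W$. This holds because it is generated through a new chain $\mu_1'\sim D$ rather than through any node in the training tree. One must also be explicit that the $l=L$ term uses the leaf itself, not an independent draw.
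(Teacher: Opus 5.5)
Your proposal is correct and follows essentially the same route as the paper: insert the intermediate layer-averaged risks, telescope, regroup each increment via $N_{l+1}=N_l\,n_{l+1}$ into an average of the $\Delta_{l,i_{1:l}}$, and apply the triangle inequality; your explicit identification of the endpoints ($l=0$ giving the population risk, $l=L$ read as the leaf itself giving the empirical risk) makes precise what the paper leaves implicit. One small correction to your closing remark: finiteness of $\operatorname{gen}$ does not by itself make the intermediate terms finite, since a nonnegative loss can have $A_0,A_L<\infty$ while some $A_l=\infty$, and then the telescoping identity reads $\infty-\infty$; finiteness of each intermediate expected loss should therefore be stated as a mild integrability assumption, which is automatic for bounded losses and which the paper also assumes tacitly.
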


The decomposition in Proposition \ref{prop:gen_bound_decomposition} is an intuitive result that allows for controlling the generalization of 
the algorithm in terms of smaller "local" generalization errors measuring the difference in performance of parent-child nodes. 

\subsection{Supersample construction}

We now introduce a supersample construction in the hierarchical FL setup, following the technique of \cite{steinkeReasoningGeneralizationConditional}.
The idea is to construct a "ghost" dataset $\bar \mu$ with the same statistical properties as the original dataset $\mu$, and measure the generalization
error in terms of an information measure between the algorithm output hypothesis when training on $\mu$ or $\bar \mu$.
At the root layer $l=1$ we sample $\tilde \mu_1^{i_1} = \left( \mu_{l,1}^{i_1}, \mu_{l,2}^{i_1}\right) \sim \mathcal{D}^{\otimes 2} $. 
For each node of the tree we sample $U^{i_{1:l}}_l$ uniformly at random from $\{1, 2\}$ (so one decision variable per tree node) 
and sample the rest of the supersample as follows:
\begin{equation}
    \tilde \mu_l^{i_{1:l}} = \left( \mu_{l, 1}^{i_{1:l}}, \mu_{l, 2}^{i_{1:l}} \right) \sim \left( \mu_{l-1, U^{i_1:l}_{l-1}}^{i_{1:l-1}} \right)^{\otimes 2}.
\end{equation}
In this way we have the following conditional independence:
\begin{equation}
    \mu_{l,1}^{i_{1:l}} \perp \mu_{l,2}^{j_{1:l}} \mid \mu_{l-1, U^{i_1:l}_{l-1}}^{i_{1:l-1}},
\end{equation}
and the following distributional equivalence:
\begin{equation}
    \mu_{l,1}^{i_{1:l}} \overset{d}{=} \mu_{l,2}^{i_{1:l}} \mid \mu_{l-1, U^{i_1:l}_{l-1}}^{i_{1:l-1}}.
\end{equation}

We denote $\bar U_l^{i_{1:l}}$ to be the flipped version of $U_l^{i_{1:l}}$, i.e., $\bar U_l^{i_{1:l}} = 3 - U_l^{i_{1:l}}$.
We also denote by $\mu_l^{i_{1:l}} = \mu_{l, U^{i_1:l}_l}^{i_{1:l}}$ the selected node at layer $l$ and by $\bar \mu_l^{i_{1:l}} = \mu_{l, \bar U^{i_1:l}_l}^{i_{1:l}}$ the ghost node at layer $l$.

\section{Main results}
\label{section:main_results}
In this section we present our main results. We start with the Wasserstein distance bound, together with some implications and discussion, and then move
to the differential privacy bound. All the proofs can be found in the appendix.

\subsection{Wasserstein distance bounds}

\begin{assumption}[Lipschitz property]
    \label{assumption:loss_Lipschitz}
    We assume that the loss function $\ell(w, z)$ is $L$-Lipschitz in its first argument for all $z \in \mathcal{Z}$, i.e., for all $w, w' \in \mathcal{W}$ and $z \in \mathcal{Z}$,
    \begin{equation}
        |\ell(w, z) - \ell(w', z)| \leq L \rho(w, w').
    \end{equation}
\end{assumption}

\begin{theorem}
    \label{thm:abstract_gen_bound_wasserstein}
    Under the above hierachical sampling structure and supersample construction, and under the Lipschitz Assumption \ref{assumption:loss_Lipschitz} on the loss, the following holds:
        \begin{IEEEeqnarray}{rCl}
            \mathbb{E}_{W,\mu_L}\,\operatorname{gen}(W,\mu_L)
            &\leq&
            2L_{\mathrm{Lip}} \sum_{l=1}^{L} \frac{1}{N_l} \sum_{i_{1:l}}
            \Biggl|
            \mathbb{E}_{\tilde{\mu}_l^{i_{1:l}},\,U_l^{i_{1:l}}}
            \Biggr.
            \nonumber\\
            &&\Biggl.
            \mathbb{W}\!\left(
            P_{W \mid \tilde{\mu}_l^{i_{1:l}},\,U_l^{i_{1:l}}},
            P_{W \mid \tilde{\mu}_l^{i_{1:l}}}
            \right)
            \Biggr|,
            \label{eq:abstract_gen_bound_wasserstein}
        \end{IEEEeqnarray}

        where the Wasserstein distance is computed with respect to the same metric $\rho$ that appears in the Lipschitz assumption.
\end{theorem}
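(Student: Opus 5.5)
The plan is to go through Proposition \ref{prop:gen_bound_decomposition} and bound each local term $\Delta_{l,i_{1:l}}$ separately with the supersample at the corresponding node. After re-indexing $l \mapsto l+1$, the sum $\sum_{l=0}^{L-1} \frac{1}{N_l}\sum_{i_{1:l}}$ over parent nodes becomes a sum over child nodes at depths $1,\dots,L$. Since each parent has exactly $n_{l+1}$ children and $N_{l+1}=N_l n_{l+1}$, the factor $\frac{1}{N_l}\cdot\frac{1}{n_{l+1}}$ combines into the weight $\frac{1}{N_{l+1}}$ that appears in \eqref{eq:abstract_gen_bound_wasserstein}. This is where the constant branching factor is used.

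The key step is to rewrite one parent--child difference in supersample form. Fix a node $\mu_{l}^{i_{1:l}}$ at depth $l\ge 1$ and its parent at depth $l-1$ (the root $D$ when $l=1$). Given the selected parent, the pair $\tilde\mu_l^{i_{1:l}}=(\mu_{l,1}^{i_{1:l}},\mu_{l,2}^{i_{1:l}})$ is i.i.d. from it. The algorithm sees only the selected copy $\mu_l^{i_{1:l}}=\mu_{l,U}^{i_{1:l}}$, together with the subtree sampled below it. The ghost copy $\bar\mu_l^{i_{1:l}}$ is therefore independent of $W$ given the parent.

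It follows that the parent-level test loss $\E_{\mu_L^{\mathrm{test}}\sim\mu_{l-1}}[\ell(W,\mu_L^{\mathrm{test}})]$ equals $\E[\ell(W,\mu_L^{\mathrm{test}})]$ with the test point drawn from the ghost $\bar\mu_l^{i_{1:l}}$. The child-level test loss uses the selected copy $\mu_l^{i_{1:l}}$. Define $g(w,\nu)=\E_{\mu_L^{\mathrm{test}}\sim\nu}[\ell(w,\mu_L^{\mathrm{test}})]$. This function is $L_{\mathrm{Lip}}$-Lipschitz in $w$ because it averages Lipschitz functions. The child's term in $\Delta_{l-1,i_{1:l-1}}$ can then be written as
\[
\E\bigl[g(W,\mu_{l,\bar U}^{i_{1:l}})-g(W,\mu_{l,U}^{i_{1:l}})\bigr]
=\E\Bigl[\sum_{k\in\{1,2\}} (-1)^{\mathbb 1[k=U]}\, g(W,\mu_{l,k}^{i_{1:l}})\Bigr].
\]
By symmetry of the construction, under the mixture $P_{W\mid\tilde\mu_l^{i_{1:l}}}$ (where $U$ is averaged out) $W$ is exchangeable across the two labels. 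Hence the expectation of the same signed expression with $W$ replaced by an independent copy $W'\sim P_{W\mid\tilde\mu}$ vanishes.

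Subtracting this zero term, for fixed $(\tilde\mu,u)$ the difference is $\E_{P_{W\mid\tilde\mu,u}} h-\E_{P_{W\mid\tilde\mu}}h$, where $h(w)=g(w,\mu_{l,\bar u})-g(w,\mu_{l,u})$. This $h$ is $2L_{\mathrm{Lip}}$-Lipschitz in $w$. Kantorovich--Rubinstein duality then bounds the difference by $2L_{\mathrm{Lip}}\,\mathbb W(P_{W\mid\tilde\mu_l,U_l},P_{W\mid\tilde\mu_l})$. Taking the expectation over $(\tilde\mu_l,U_l)$ and summing gives the claim. Using $|\Delta|\le\sum_{\text{children}}|\cdot|$ is consistent with the absolute values placed per node in the statement.

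The main obstacle is the conditioning. The parent node is itself randomly selected through $U$ at depth $l-1$, and the subtrees below both copies and the ghosts elsewhere in the tree affect $W$. I would condition on everything outside the subtree of $\tilde\mu_l^{i_{1:l}}$, or equivalently regard $P_{W\mid\tilde\mu_l,U_l}$ as already marginalizing over it. Two facts then need to be checked carefully:
\begin{itemize}
\item the ghost child really has the parent's leaf-test distribution jointly with $W$;
\item the symmetric-mixture term vanishes.
\end{itemize}
Both should follow from the conditional i.i.d. property and the distributional equivalence of the two copies given the selected parent. The ordering assumption, that the algorithm uses only the selected branch, is exactly what I expect to have to state explicitly.
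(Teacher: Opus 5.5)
Your proposal is correct and follows the paper's proof essentially step for step. Both use the decomposition of Proposition~\ref{prop:gen_bound_decomposition}, replace the parent-level test loss by the ghost child's, apply the triangle inequality to get the child weights $1/N_{l}$, subtract the vanishing term built from an independent copy $W'\sim P_{W\mid\tilde\mu}$, and finish with Kantorovich--Rubinstein duality. The only difference is that you apply duality once to the $2L_{\mathrm{Lip}}$-Lipschitz function $h$, while the paper splits $h$ into a test term and a train term and applies duality to each $L_{\mathrm{Lip}}$-Lipschitz piece, which gives the same constant.

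Two small corrections to your reasoning. First, the zero term vanishes not because of any exchangeability of $W$, but simply because $W'\perp U_l^{i_{1:l}}\mid\tilde\mu_l^{i_{1:l}}$, $U_l^{i_{1:l}}$ is uniform and independent of $\tilde\mu_l^{i_{1:l}}$, and $h$ changes sign when $u$ is flipped. Second, the marginalized reading of $P_{W\mid\tilde\mu_l^{i_{1:l}},U_l^{i_{1:l}}}$ is the right one and already suffices, since $h$ depends only on $(\tilde\mu_l^{i_{1:l}},U_l^{i_{1:l}})$. Conditioning on the rest of the tree is not equivalent to it: by joint convexity of $\mathbb{W}$, that would produce a generally larger Wasserstein term rather than the one in the statement.
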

\begin{proof}
    The proof is based on the decomposition of Propoisiton \ref{prop:gen_bound_decomposition} and can be found in the Appendix \ref{app:proofs_main_results}.
\end{proof}

\begin{corollary}
    \label{corollary:mutual_information_bound}
    For bounded losses in $[0, 1]$, we obtain:
    \begin{equation}
        \mathbb{E}_{W, \mu_L} gen(W, \mu_L) \leq \sum_{l=1}^L \frac{1}{N_l} \sum_{i_{1:l}}  \E_{\tilde \mu_l^{i_{1:l}}} \sqrt{ 2 I(W; U_l^{i_{1:l}} | \tilde \mu_l^{i_{1:l}})}.
    \end{equation}
\end{corollary}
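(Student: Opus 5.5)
The plan is to derive Corollary \ref{corollary:mutual_information_bound} from Theorem \ref{thm:abstract_gen_bound_wasserstein} by specialising the metric $\rho$. Following \cite{rodriguez-galvezTighterExpectedGeneralization2022}, I take the discrete metric $\rho(w,w') = \mathbf{1}\{w \neq w'\}$ on $\mathcal{W}$. A loss with values in $[0,1]$ satisfies $|\ell(w,z)-\ell(w',z)| \leq 1 = \rho(w,w')$ whenever $w\neq w'$, and the difference is $0$ when $w = w'$. So Assumption \ref{assumption:loss_Lipschitz} holds with $L_{\mathrm{Lip}}=1$. Under this metric, the Wasserstein-1 distance of Theorem \ref{thm:abstract_gen_bound_wasserstein} equals the total variation distance:
\begin{equation*}
\mathbb{W}\bigl(P_{W\mid\tilde\mu_l^{i_{1:l}},U_l^{i_{1:l}}}, P_{W\mid\tilde\mu_l^{i_{1:l}}}\bigr) = \mathrm{TV}\bigl(P_{W\mid\tilde\mu_l^{i_{1:l}},U_l^{i_{1:l}}}, P_{W\mid\tilde\mu_l^{i_{1:l}}}\bigr).
\end{equation*}
Substituting into \eqref{eq:abstract_gen_bound_wasserstein} gives a bound of $2\sum_l \frac{1}{N_l}\sum_{i_{1:l}}\E_{\tilde\mu_l,U_l}\mathrm{TV}(\cdot,\cdot)$. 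The absolute value can be dropped because the quantity inside is nonnegative.

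Next I apply Pinsker's inequality, $\mathrm{TV}(P,Q)\leq\sqrt{\tfrac12 D(P\|Q)}$, inside the expectation over $U_l^{i_{1:l}}$. Jensen's inequality for the concave square root moves the expectation over $U_l^{i_{1:l}}$ inside, with $\tilde\mu_l^{i_{1:l}}$ held fixed:
\begin{equation*}
\E_{U_l^{i_{1:l}}}\sqrt{\tfrac12 D\bigl(P_{W\mid\tilde\mu_l^{i_{1:l}},U_l^{i_{1:l}}}\big\| P_{W\mid\tilde\mu_l^{i_{1:l}}}\bigr)} \leq \sqrt{\tfrac12 I\bigl(W;U_l^{i_{1:l}}\mid\tilde\mu_l^{i_{1:l}}\bigr)},
\end{equation*}
where the right-hand side is the disintegrated conditional mutual information for a fixed realisation of $\tilde\mu_l^{i_{1:l}}$. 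Multiplying by the factor $2$ gives $2\sqrt{\tfrac12 I}=\sqrt{2I}$. Keeping the outer expectation $\E_{\tilde\mu_l^{i_{1:l}}}$ outside the square root yields exactly the stated sum.

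The main obstacle is bookkeeping the constant and confirming the conditioning. I need to check that the factor $2L_{\mathrm{Lip}}$ in Theorem \ref{thm:abstract_gen_bound_wasserstein} is compatible with the discrete-metric Lipschitz constant. If $\ell\in[0,1]$ only gives $|\ell(w,z)-\ell(w',z)|\leq 1$, then $L_{\mathrm{Lip}}=1$ and the constant $\sqrt{2}$ follows as above; any sharper centering argument would only improve it. I also need to check that $P_{W\mid\tilde\mu_l^{i_{1:l}}}$ is indeed the $U_l^{i_{1:l}}$-mixture of $P_{W\mid\tilde\mu_l^{i_{1:l}},U_l^{i_{1:l}}}$. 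That holds because $U_l^{i_{1:l}}$ is uniform and independent of $\tilde\mu_l^{i_{1:l}}$, so the averaged KL is the conditional mutual information. Finally, the discrete metric may not make $\mathcal{W}$ Polish, but only the coupling characterisation $\mathbb{W}=\inf P(W\neq W')$ is used, and that holds for TV on standard Borel spaces.
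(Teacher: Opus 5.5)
Your proposal is correct and follows the paper's own route. You specialise Theorem \ref{thm:abstract_gen_bound_wasserstein} to the discrete metric, so that $L_{\mathrm{Lip}}=1$ and $\mathbb{W}$ becomes total variation, then apply Pinsker, then Jensen over $U_l^{i_{1:l}}$ with $\tilde\mu_l^{i_{1:l}}$ fixed to get $2\sqrt{\tfrac12 I}=\sqrt{2I}$; this is the chain the paper writes out in the out-of-sample part of the proof of Corollary \ref{corollary:wang_implication}, and your handling of the constant and of the disintegrated conditional mutual information is right.
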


This is our main result, which relates the generalization error with the Wasserstein distance between the distribution of the output hypothesis when conditioning on a single selected node and its ghost copy. 

We observe two contrasting effects in the bound: on the one hand, the number of terms scales as $O(L)$, on the other hand, the more layers we have, the lower the intra-class variance
will be at each parent-child node, which should reduce the Wasserstein distance between the two distributions.
We also note that the first layers are the most impactful, which is reasonable since they introduce correlations among very large chunks of the dataset and it is an
inherent feature of the hierarchical sampling strategy.

\begin{lemma}
    \label{lemma:Lipschitz_condition_expected_value}
    Under the Lipschitz Assumption \ref{assumption:loss_Lipschitz}, the function $f: \mathcal{W} \times \mathcal{P}(\mathcal{Z}) \to \R$ defined as
    $f(w, P) = \E_{z \sim P}[\ell(w, z)]$ is $L$-Lipschitz in its first argument for any distribution $P$ over $\mathcal{Z}$.
\end{lemma}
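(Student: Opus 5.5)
The plan is to argue pointwise in $z$ and then integrate, using only monotonicity and linearity of the expectation. Fix an arbitrary distribution $P$ over $\mathcal{Z}$ and two hypotheses $w, w' \in \mathcal{W}$.

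The first step is well-definedness. We need $z \mapsto \ell(w,z)$ to be measurable and $P$-integrable for every $w$, so that $f(w,P)$ is a finite real number. I would take this as part of the standing setup: the generalization error in \eqref{eq:gen_error_definition} is only meaningful under it. By Assumption \ref{assumption:loss_Lipschitz}, integrability at a single $w_0$ already gives it for all $w$, since $|\ell(w,z)| \le |\ell(w_0,z)| + L\rho(w,w_0)$ and the right-hand side is $P$-integrable.

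With integrability in hand, linearity of expectation gives
\[
f(w,P) - f(w',P) = \E_{z\sim P}\bigl[\ell(w,z) - \ell(w',z)\bigr].
\]
We then bound the absolute value by Jensen's inequality, i.e. $|\E X| \le \E|X|$. Next we apply Assumption \ref{assumption:loss_Lipschitz} pointwise for every $z \in \mathcal{Z}$, namely $|\ell(w,z)-\ell(w',z)| \le L\rho(w,w')$. Monotonicity of the integral then yields
\[
|f(w,P)-f(w',P)| \le \E_{z\sim P}\bigl[L\rho(w,w')\bigr] = L\rho(w,w'),
\]
because the bound does not depend on $z$ and $P$ is a probability measure.

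Since $P$ was arbitrary, $f(\cdot,P)$ is $L$-Lipschitz with respect to the same metric $\rho$, with a constant that is uniform in $P$. This uniformity is what matters downstream. It lets us apply the Kantorovich--Rubinstein duality to the maps $w \mapsto \E_{\mu_L^{\mathrm{test}}\sim\mu_l^{i_{1:l}}}[\ell(w,\mu_L^{\mathrm{test}})]$ that appear in the terms $\Delta_{l,i_{1:l}}$ of Proposition \ref{prop:gen_bound_decomposition}, even though the inner distribution is itself random.

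There is no genuine obstacle here. The only point needing care is the integrability and measurability issue above, which I would address by the one-line reduction to a single reference hypothesis.
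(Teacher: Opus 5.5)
Your proof is correct and follows the same route as the paper: write the difference as a single expectation, pass the absolute value inside via $|\E X| \le \E|X|$, and apply the Lipschitz assumption pointwise in $z$. Your extra remark on integrability, which reduces it to a single reference hypothesis $w_0$, makes explicit a point the paper leaves implicit.
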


\begin{remark}
    In the general case, Lemma \ref{lemma:Lipschitz_condition_expected_value} is tight  (for example when $P$ is a Dirac delta distribution)
     but it introduces looseness whenever $P$ is diffuse.
    In general our hypothesis $W \in \mathcal{W}$ is the output of a learning algorithm that minimizes the empirical risk on the global distribution
    $\mu_L \sim \mu_{L-1} \sim \cdots \sim \mu_1 \sim D$, so it is reasonable to expect that different realizations of $W$ will all perform somewhat similarly
    when averaging.
\end{remark}

\begin{corollary}
    \label{corollary:wang_implication}
    Under the Assumption that the loss is bounded in $[0, 1]$, Theorem \ref{thm:abstract_gen_bound_wasserstein} implies 
    \cite[Theorem 3]{wangGeneralizationFederatedLearning2025}.
\end{corollary}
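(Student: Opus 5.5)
The plan is to reduce Corollary \ref{corollary:wang_implication} to Corollary \ref{corollary:mutual_information_bound}. That corollary is already the bounded-loss specialization of Theorem \ref{thm:abstract_gen_bound_wasserstein}. We then show that its right-hand side is dominated by the right-hand side of \cite[Theorem 3]{wangGeneralizationFederatedLearning2025} once the latter's two-level setting is embedded in our tree.

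\textbf{Step 1: embedding.} The FL setting of \cite{wangGeneralizationFederatedLearning2025} has clients drawn from a meta-distribution and data points drawn from clients. It corresponds to $L=2$, with $n_1$ clients and $n_2$ points per client. Their supersample consists of pairs of clients (with a client selector) and, inside each selected client, pairs of data points (with a data selector). This coincides with our $\tilde\mu_1^{i_1},U_1^{i_1}$ and $\tilde\mu_2^{i_{1:2}},U_2^{i_{1:2}}$.

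\textbf{Step 2: passing to mutual information.} For $\ell\in[0,1]$, the discrete metric $\rho(w,w')=\mathbf 1\{w\neq w'\}$ makes $\ell$ $1$-Lipschitz. The Wasserstein distance then becomes total variation, and Pinsker's inequality gives
\[
2\,\mathbb{W}\bigl(P_{W|\tilde\mu_l,U_l},P_{W|\tilde\mu_l}\bigr)\le \sqrt{2\,D\bigl(P_{W|\tilde\mu_l,U_l}\,\|\,P_{W|\tilde\mu_l}\bigr)}.
\]
Averaging over $U_l$ and applying Jensen (concavity of the square root) yields the per-node terms $\E_{\tilde\mu_l}\sqrt{2I(W;U_l|\tilde\mu_l)}$ of Corollary \ref{corollary:mutual_information_bound}. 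The constant should be tracked here, since a factor $1/2$ absorbed by total variation is exactly what makes the $2L_{\mathrm{Lip}}$ in Theorem \ref{thm:abstract_gen_bound_wasserstein} match.

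\textbf{Step 3: comparison of terms, where I expect the real work.} Their Theorem 3 is presumably stated with conditioning on the whole supersample, either as $I(W;U_l^{i_{1:l}}\mid\tilde\mu)$ or as a joint CMI of all selectors. Our terms condition only on the local pair $\tilde\mu_l^{i_{1:l}}$. To compare them, I would use the chain-rule/data-processing argument standard for individual-sample CMI (as in \cite{rodriguez-galvezTighterExpectedGeneralization2022}).

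1. For the individual version, write the local quantity as a Wasserstein distance averaged only over the local supersample. Convexity of $\mathbb{W}$ in its arguments, or joint convexity of KL, then gives
\[
\E_{\tilde\mu_l^{i_{1:l}}}\mathbb{W}\bigl(P_{W|\tilde\mu_l^{i_{1:l}},U_l^{i_{1:l}}},P_{W|\tilde\mu_l^{i_{1:l}}}\bigr)\le \E_{\tilde\mu}\mathbb{W}\bigl(P_{W|\tilde\mu,U_l^{i_{1:l}}},P_{W|\tilde\mu}\bigr).
\]
This follows because both measures on the left are mixtures of those on the right over the remaining supersample, and that remaining supersample is independent of $U_l^{i_{1:l}}$ given $\tilde\mu_l^{i_{1:l}}$. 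Checking this independence in the hierarchical construction is the main subtlety: descendants of a node depend on $U_l^{i_{1:l}}$ through which copy is selected. The fix I would try first is to include the full subtree supersample in the conditioning. Alternatively, one can observe that their bound conditions on the full supersample, under which the mixture argument goes through directly.

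2. If their bound is instead a joint CMI of all selectors, continue with Jensen over the index averages. Then apply the chain rule $\sum_{i}I(W;U^{i}|\tilde\mu)\le I(W;U|\tilde\mu)$, which holds because the selectors are mutually independent given $\tilde\mu$.

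Finally, match the normalizations $1/N_l$ with their $1/n_1$ and $1/(n_1n_2)$. Note that their bound also keeps the root-level participation term and the out-of-sample term separately, corresponding to our $l=1$ and $l=2$ summands.
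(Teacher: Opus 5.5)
There is a genuine gap in the order of operations. You propose to start from Corollary~\ref{corollary:mutual_information_bound} and then dominate each term $\E_{\tilde\mu_l}\sqrt{2I(W;U_l\mid\tilde\mu_l)}$, with the expectation over $\tilde\mu_l$ outside the root, by the matching term of \cite[Theorem 3]{wangGeneralizationFederatedLearning2025}. As the paper's proof indicates, that bound is also individual with the outer expectation outside the root, but it uses a finer conditioning $Y$. For the participation part, $Y=(\tilde Z^i,U_i)$, the training leaves of \emph{both} client copies, so your item 2 is not needed.

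This comparison is false in general, because the expectation over $Y$ sits outside the square root. Take $L=2$, $n_1=n_2=1$, clients of the form $(1-q)\delta_0+q\delta_c$ with $c$ drawn from a continuous law, and $W$ equal to the training point. Then $I(W;U_1\mid\tilde\mu_1)=q\ln 2$ for a.e.\ $\tilde\mu_1$, so the $l=1$ term of Corollary~\ref{corollary:mutual_information_bound} is $\sqrt{2q\ln 2}$. Wang's participation term is $(2q-q^2)\sqrt{2\ln 2}$, and the out-of-sample terms agree, so for small $q$ the right-hand side of Corollary~\ref{corollary:mutual_information_bound} exceeds Wang's. Your alternative via joint convexity of KL fails for the same reason, since it only controls $D$ inside the root.

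What works is your item 1 applied to $\mathbb{W}$ itself, i.e.\ total variation under the discrete metric. Joint convexity under a common mixing measure gives $\E\,\mathbb{W}(P_{W|\tilde\mu_l,U_l},P_{W|\tilde\mu_l})\le\E\,\mathbb{W}(P_{W|Y,U_l},P_{W|Y})$, and only afterwards do you apply Pinsker and Jensen over $U_l$ at fixed $Y$. So you must start from Theorem~\ref{thm:abstract_gen_bound_wasserstein} and perform Step 2 after Step 3. In the example the chain reads $q\le 2q-q^2\le(2q-q^2)\sqrt{2\ln 2}$.

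The mixture step also needs $Y\perp U_l\mid\tilde\mu_l$. This fails in the paper's supersample, where descendants are drawn only from the selected copy. Adding the selected subtree to the conditioning does not repair it: in the example, conditioning on the selected leaf makes the term vanish. You have to enlarge the probability space by drawing a subtree, with its selectors, under the ghost copy as well, and letting $Y$ be the two subtrees labelled by copy index rather than by selected/ghost. Then:
\begin{itemize}
\item $U_l\perp(\tilde\mu_l,Y)$, and $P_{W|\tilde\mu_l,U_l}$ is unchanged;
\item both $P_{W|\tilde\mu_l,U_l}$ and $P_{W|\tilde\mu_l}$ are mixtures over the same $P_{Y|\tilde\mu_l}$;
\item the Markov chain $\tilde\mu_l-(Y,U_l)-W$ (the algorithm sees only leaves) lets you drop $\tilde\mu_l$ and reach Wang's conditioning.
\end{itemize}
At $l=2$ the independence is automatic, since leaves have no descendants. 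This enlargement is exactly the paper's ``extra ghost sample''.

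With both repairs your route is valid, but it differs from the paper's. The paper does not compare right-hand sides. It returns to the step before Kantorovich--Rubinstein duality in the proof of Theorem~\ref{thm:abstract_gen_bound_wasserstein}, takes total expectation over the two subtrees, applies duality conditionally on them, and then applies Pinsker. For the out-of-sample part it simply applies Pinsker and identifies $\tilde\mu_2^{i,j}$ with $(\tilde Z^i,V_i)$. Your version instead shows directly that the right-hand side of Theorem~\ref{thm:abstract_gen_bound_wasserstein} is termwise at most Wang's, and it makes that identification rigorous.
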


\begin{corollary}
    \label{corollary:kavian_implication}
    Under the Assumption that the loss is bounded, and that the sampling is positionally dependent (as described in 
    Remark \ref{remark:positionally_dependant_sampling}), Theorem \ref{thm:abstract_gen_bound_wasserstein} implies \cite[Theorem 1]{kavianHeterogeneityMattersEven2025}.
\end{corollary}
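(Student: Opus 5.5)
The statement to prove is Corollary \ref{corollary:kavian_implication}. The plan is to specialize Theorem \ref{thm:abstract_gen_bound_wasserstein}, or its bounded-loss form Corollary \ref{corollary:mutual_information_bound}, to the two-level structure of \cite{kavianHeterogeneityMattersEven2025}, and then match the resulting terms to their Theorem 1.

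\textbf{Step 1: embed their setting.} Following Remark \ref{remark:positionally_dependant_sampling}, take $L=2$ (clients and data points). Let the root kernel be position dependent: for $i_1=k$ it outputs the fixed client distribution $\mathcal{D}_k$ deterministically. The first layer is then not random, so $\mu_1^{k}$ and its ghost coincide, and $U_1^{k}$ carries no information. The layer-$1$ terms, which correspond to the participation gap, therefore vanish: $P_{W\mid\tilde\mu_1,U_1}=P_{W\mid\tilde\mu_1}$, so the Wasserstein distance is zero. Their local-model-plus-aggregation algorithm is just one particular kernel $P_{W\mid\mu}$, and Theorem \ref{thm:abstract_gen_bound_wasserstein} imposes no constraint on the kernel, so it applies directly. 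If their clients have unequal sample sizes, I would use the deterministic-copy trick of the remark, or reweighting, to restore the constant branching factor.

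\textbf{Step 2: reduce to leaf terms.} Only the leaf layer $l=2$ remains. For a loss bounded in $[0,1]$, the Wasserstein distance is taken under the discrete metric $\rho(w,w')=\mathbb{1}[w\neq w']$, for which the loss is $1$-Lipschitz. Under this metric, $\mathbb{W}$ equals total variation. Applying Pinsker's inequality and Jensen's inequality then gives
\begin{equation*}
\frac{1}{N_2}\sum_{i_{1:2}}\E\sqrt{2I(W;U_2^{i_{1:2}}\mid\tilde\mu_2^{i_{1:2}})},
\end{equation*}
with appropriate constants. This is exactly the per-sample CMI expression of their Theorem 1, with $n_2$ samples per client and $N_2=Kn_2$.

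\textbf{Step 3: match their form, the main obstacle.} Their bound is stated per client, with the information measured between each client's local model and its samples, rather than with $W$. To bridge this I would invoke the Markov chain $U_2^{k,j}\to W_1^k\to W$ that the aggregation structure induces. By the data processing inequality, $I(W;U_2^{k,j}\mid\tilde\mu)\le I(W_1^k;U_2^{k,j}\mid\tilde\mu)$, so our bound is at most their bound. If their conditioning is on the supersample of client $k$ only, I would additionally use the independence across clients and condition on the remaining clients' data. The delicate points are reconciling the constants and the weighting by client, and checking that conditioning on the full $\tilde\mu_2^{i_{1:2}}$ matches their conditioning. I expect this to hold because the construction is identical at the leaf level.
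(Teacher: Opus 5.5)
\textbf{There is a genuine gap: you misidentify the target expression.} Your Steps 1 and 2 agree with the paper's proof. The deterministic first layer removes the $l=1$ terms, and the discrete metric plus Pinsker give the per-sample quantity $\frac{1}{Kn}\sum_{k,j}\E\sqrt{2I(W;J_{k,j}\mid\tilde Z_{k,j})}$.

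The right-hand side of \cite[Theorem 1]{kavianHeterogeneityMattersEven2025}, which the paper's chain ends at, is the client-level quantity $\sqrt{\frac{2}{nK}\sum_{k=1}^K I(W_k;J_k\mid\tilde Z_k)}$. Here $J_k=(J_{k,1},\dots,J_{k,n})$ is the whole selection vector of client $k$ and $\tilde Z_k$ its whole supersample. So your per-sample expression is not ``exactly'' their bound. Your data-processing step $U_2^{k,j}\to W_1^k\to W$ only replaces the hypothesis $W$ by the local model. It does not bridge the two remaining mismatches: $n$ single-pair-conditioned per-sample CMIs versus one joint CMI, and the square root sitting outside the sum over $j$ and $k$. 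Your closing sentence, that this should hold ``because the construction is identical at the leaf level,'' is where the proof's actual work lies. The two constructions are not identical.

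The missing steps are these.
\begin{enumerate}
\item Apply Jensen's inequality to move the expectation over $\tilde Z_{k,j}$ and the average over $j$ inside the square root, and later the average over $k$ as well.
\item Enlarge the conditioning: $I(W;J_{k,j}\mid\tilde Z_{k,j})\le I(W;J_{k,j}\mid\tilde Z_k)$. This holds because $J_{k,j}$ is independent of the supersample, so $H(J_{k,j}\mid\tilde Z_{k,j})=H(J_{k,j}\mid\tilde Z_k)$, while $H(J_{k,j}\mid W,\tilde Z_{k,j})\ge H(J_{k,j}\mid W,\tilde Z_k)$. Between Steps 2 and 3 you silently switched the conditioning from $\tilde\mu_2^{i_{1:2}}$ to $\tilde\mu$. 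Since enlarging the conditioning is not monotone for CMI in general, this argument is exactly what justifies it.
\item Show superadditivity: $\sum_{j}I(W;J_{k,j}\mid\tilde Z_k)\le I(W;J_k\mid\tilde Z_k)$. Use the chain rule $I(W;J_k\mid\tilde Z_k)=\sum_j I(W;J_{k,j}\mid\tilde Z_k,J_{k,1:j-1})$. Because the $J_{k,j}$ are mutually independent and independent of $\tilde Z_k$, each summand equals $H(J_{k,j})-H(J_{k,j}\mid W,\tilde Z_k,J_{k,1:j-1})\ge I(W;J_{k,j}\mid\tilde Z_k)$.
\end{enumerate}
After these steps, your Markov-chain argument applied to the whole vector, $I(W;J_k\mid\tilde Z_k)\le I(W_k;J_k\mid\tilde Z_k)$, yields their expression. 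The chain $J_k\to W_k\to W$ given $\tilde Z_k$ holds since the other clients are independent, and you may apply it before or after the superadditivity step. With these steps added, your argument coincides with the paper's proof.
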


Those bounds are in terms of mutual information, hence the need to assume bounded losses to apply Pinsker's inequality with the discrete metric. Both proofs contain long chains of inequalities, which introduce looseness.

\begin{lemma}
    \label{lemma:bound_with_different_conditioning}
    Under the same Assumptions of Theorem \ref{thm:abstract_gen_bound_wasserstein}, by following the steps in the proof of the participation gap
    for Corollary \ref{corollary:wang_implication} we can get the following bound:

    \begin{IEEEeqnarray}{rCl}
        \mathbb{E}_{W,\mu_L}\,&&\operatorname{gen}(W,\mu_L)
        \leq
        2L \sum_{l=1}^{L} \frac{1}{N_l} \sum_{i_{1:l}}
        \Biggl|
        \mathbb{E}_{\tilde{\mu}_l^{i_{1:l}},\,U_l^{i_{1:l}}}
        \mathbb{E}_{\mu_L^{i_{1:l}} \sim \mu_l^{i_{1:l}}}
        \Biggr.
        \nonumber\\
        &&\Biggl.
        \mathbb{E}_{\bar{\mu}_L^{i_{1:l}} \sim \bar{\mu}_l^{i_{1:l}}}
        \mathbb{W}\!\left(
        P_{W \mid \tilde{\mu}_L^{i_{1:l}},\,U_l^{i_{1:l}}},
        P_{W \mid \tilde{\mu}_L^{i_{1:l}}}
        \right)
        \Biggr|,
    \end{IEEEeqnarray}
    where the outer expecation is as usual over the realization of the supersample at layer $l$ and index $i_{1:l}$, but the inner expectations are over the whole realization of the tree under the node $\mu_l^{i_{1:l}}$ and its ghost copy $\bar \mu_l^{i_{1:l}}$.

\end{lemma}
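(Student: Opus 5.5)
The plan is to reuse the skeleton of the proof of Theorem \ref{thm:abstract_gen_bound_wasserstein}. We start from the decomposition of Proposition \ref{prop:gen_bound_decomposition} and bound each local term $|\Delta_{l,i_{1:l}}|$ separately. The only change is the conditioning: instead of conditioning on the supersample node pair $\tilde\mu_l^{i_{1:l}}$, we condition on the full supersample subtree $\tilde\mu_L^{i_{1:l}}$. This subtree consists of all leaves drawn under the selected node $\mu_l^{i_{1:l}}$ and under its ghost $\bar\mu_l^{i_{1:l}}$, which is exactly how the participation-gap proof in Corollary \ref{corollary:wang_implication} proceeds.

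\textbf{Step 1: symmetrization.} Fix $l$ and $i_{1:l}$. Using the supersample, rewrite $\Delta_{l,i_{1:l}}$ as an expectation of a difference of losses on the selected versus the ghost branch. Concretely, the fresh test point $\mu_L^{\mathrm{test}}\sim\mu_{l-1}$ (the parent) can be realized as a leaf under the ghost node $\bar\mu_l^{i_{1:l}}$, because $W$ does not depend on $\bar\mu_l^{i_{1:l}}$ or on its subtree. The in-sample term is realized through the leaves under the selected node $\mu_l^{i_{1:l}}$. This gives
\[
\Delta_{l,i_{1:l}} = \E\bigl[g(W,\tilde\mu_L^{i_{1:l}},\bar U_l^{i_{1:l}}) - g(W,\tilde\mu_L^{i_{1:l}},U_l^{i_{1:l}})\bigr],
\]
where $g(w,\tilde\mu_L^{i_{1:l}},u)$ is the expected loss of $w$ on a test leaf drawn from the branch indexed by $u$, that is, $f(w,\cdot)$ from Lemma \ref{lemma:Lipschitz_condition_expected_value} evaluated at that branch's leaf distribution. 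There is an index mismatch between the proposition (layers $0,\dots,L-1$, parent versus children) and the bound (layers $1,\dots,L$). I would resolve it by relabeling $l\to l+1$ and using the branching-factor normalization $1/N_{l}$ versus $1/(N_{l-1}n_l)$, which are the same quantity.

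\textbf{Step 2: swap to a Wasserstein distance.} Condition on $\tilde\mu_L^{i_{1:l}}$. Because the two branches are exchangeable given $\tilde\mu_L^{i_{1:l}}$, the same expectation can be taken with $W\sim P_{W|\tilde\mu_L^{i_{1:l}}}$, the mixture over $U_l^{i_{1:l}}$, where the difference vanishes. Subtracting this zero term lets us write $\Delta_{l,i_{1:l}}$ as a sum of two terms of the form
\[
\E_{\tilde\mu_L^{i_{1:l}},U}\Bigl[\E_{P_{W|\tilde\mu_L^{i_{1:l}},U}}h - \E_{P_{W|\tilde\mu_L^{i_{1:l}}}}h\Bigr],
\]
with $h=g(\cdot,\tilde\mu_L^{i_{1:l}},u)$ for fixed $u$. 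By Lemma \ref{lemma:Lipschitz_condition_expected_value}, each such $h$ is $L$-Lipschitz in $w$. Kantorovich--Rubinstein duality then bounds each term by $L\,\mathbb{W}(P_{W|\tilde\mu_L^{i_{1:l}},U},P_{W|\tilde\mu_L^{i_{1:l}}})$, and the two terms together produce the factor $2L$.

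\textbf{Step 3: rewrite the expectation.} Finally, split the expectation over $\tilde\mu_L^{i_{1:l}}$ into an outer expectation over $(\tilde\mu_l^{i_{1:l}},U_l^{i_{1:l}})$ and inner expectations over the subtrees sampled below $\mu_l^{i_{1:l}}$ and $\bar\mu_l^{i_{1:l}}$. This matches the displayed ordering of expectations in the statement. Summing over $l$ and $i_{1:l}$ with the weights from Proposition \ref{prop:gen_bound_decomposition} completes the bound.

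\textbf{Main obstacle.} The delicate point is Step 1. We must justify that conditioning on the whole subtree still leaves the ghost branch acting as a valid independent test sample, and that $g$, for fixed $u$, is a function of $\tilde\mu_L^{i_{1:l}}$ alone. The parent distribution $\mu_{l-1}$ is not contained in $\tilde\mu_L^{i_{1:l}}$. I would handle this by defining $g$ through the sampled children (the empirical test leaves in the branch) rather than through the parent distribution, and then use the tower property together with the conditional independence of the two copies given their common parent.
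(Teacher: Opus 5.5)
Your plan is essentially the paper's proof. The paper reruns the argument of Theorem~\ref{thm:abstract_gen_bound_wasserstein}: ghost-branch symmetrization, subtracting the zero-mean term built from the $U$-mixture, then Lemma~\ref{lemma:Lipschitz_condition_expected_value} plus Kantorovich--Rubinstein for each of the two terms, giving $2L$. Just before the duality step, it applies the law of total expectation to also condition on the realized subtrees under $\mu_l^{i_{1:l}}$ and $\bar\mu_l^{i_{1:l}}$, exactly as in the participation-gap part of Corollary~\ref{corollary:wang_implication}.

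The one thing to correct is your ``main obstacle'': after symmetrization the parent $\mu_{l-1}$ no longer enters. Your $g(\cdot,\cdot,u)$ only needs the pair $\tilde\mu_l^{i_{1:l}}$, which the paper simply keeps in the conditioning, bounding by $\mathbb{W}\bigl(P_{W\mid\tilde\mu_l^{i_{1:l}},U_l^{i_{1:l}},\mu_L^{i_{1:l}},\bar\mu_L^{i_{1:l}}},P_{W\mid\tilde\mu_l^{i_{1:l}},\mu_L^{i_{1:l}},\bar\mu_L^{i_{1:l}}}\bigr)$. Your proposed fix of evaluating $g$ on the realized leaves of a branch should be avoided: on the selected branch those leaves are training points, so the identity with $\Delta_{l,i_{1:l}}$ would break.
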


In general it is not possible to compare the two bounds given the lack of data processing inequality for the Wasserstein distance. However, 
in the case of a bounded loss, after applying Pinsker and moving the expectation inside the square root using Jensen we have that Lemma \ref{lemma:bound_with_different_conditioning} 
is looser than Corollary \ref{corollary:mutual_information_bound}. We quantify the looseness in the proof in the appendix.

\subsection{Privacy Bound}
\label{section:privacy}

We now show that our framework is also suitable for deriving generalization bounds from differential privacy (DP) assumptions.
We consider a \textit{single round} setting, where each node forms an hypothesis $W_l^{i_{1:l}}$ by aggregating the hypothesis formed at the lower layer $l+1$ 
(in the case of the last layer, those hypothesis would just be the data points). We can then encode the privacy of the whole FL algorithm by imposing local privacy constraints
on the aggregation steps as follows:
\begin{assumption}
    \label{assumption:privacy}
    Each aggregation algorithm $P_{W^{i_{1:l-1}}_{l-1}|W_{l}^{i_{1:l-1}}}$ is $\epsilon_l$-differentially private, i.e.:
    \begin{equation}
        \frac{P_{W^{i_{1:l-1}}_{l-1}|W_{l}^{i_{1:l-1}}}(S|w)}{P_{W^{i_{1:l-1}}_{l-1}|W_{l}^{i_{1:l-1}}}(S|w')} \leq e^{\epsilon_l}.
    \end{equation}
    For any $w, w' \in \mathcal{W}$ that are differing in a single data point.
    This also implies $I(W^{i_{1:l-1}}_{l-1}; W_{l}^{i_{1:l}}) \leq \epsilon_l (e^{\epsilon_l} - 1)$.
\end{assumption}
Intuitively, this condition implies that at any point in the tree, the output hypothesis of the parent does not reveal information about
the output hypothesis of the children, and therefore about the data points.
\begin{theorem}
    \label{thm:generalization_bound_privacy_assumption}
    Under the privacy Assumption \ref{assumption:privacy}, loss bounded in $[0, 1]$, and the same setup as Theorem \ref{thm:abstract_gen_bound_wasserstein}, the following holds:
    \begin{equation}
        \mathbb{E}_{W, \mu_L} gen(W, \mu_L) \leq 2 \sum_{l=1}^L  \sqrt{ \min( \epsilon_l, \epsilon_l (e^{\epsilon_l} - 1) ) }.
    \end{equation}
\end{theorem}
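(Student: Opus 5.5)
We start from Corollary \ref{corollary:mutual_information_bound}, which already specializes Theorem \ref{thm:abstract_gen_bound_wasserstein} to losses in $[0,1]$ and bounds the generalization error by
\begin{equation*}
\sum_{l=1}^L \frac{1}{N_l}\sum_{i_{1:l}} \E_{\tilde\mu_l^{i_{1:l}}}\sqrt{2 I(W;U_l^{i_{1:l}}\mid\tilde\mu_l^{i_{1:l}})}.
\end{equation*}
The rest of the argument bounds each conditional mutual information by a layer-dependent quantity that does not depend on $i_{1:l}$. The averaging $\frac{1}{N_l}\sum_{i_{1:l}}$ then collapses, and each layer contributes a single term. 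The plan is to show
\begin{equation*}
I(W;U_l^{i_{1:l}}\mid\tilde\mu_l^{i_{1:l}})\le \min\bigl(\epsilon_l,\epsilon_l(e^{\epsilon_l}-1)\bigr)
\end{equation*}
up to a constant factor, which is absorbed into the leading factor $2$ of the claimed bound.

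\textbf{Markov chain.} In the single-round setting, the output is $W=W_0$. The node hypotheses form the chain
\begin{equation*}
U_l^{i_{1:l}} \to W_l^{i_{1:l}} \to W_{l-1}^{i_{1:l-1}} \to \cdots \to W_0 = W,
\end{equation*}
conditionally on $\tilde\mu_l^{i_{1:l}}$ and on the supersample in the other branches. This holds because $U_l^{i_{1:l}}$ only decides which of the two candidate subtrees feeds node $i_{1:l}$. The subtree's influence on $W$ therefore passes through $W_l^{i_{1:l}}$, since each parent sees only its children's hypotheses. By the data processing inequality,
\begin{equation*}
I(W;U_l^{i_{1:l}}\mid\tilde\mu_l^{i_{1:l}})\le I(W_{l-1}^{i_{1:l-1}};U_l^{i_{1:l}}\mid\tilde\mu_l^{i_{1:l}}).
\end{equation*}
Conditioning on the other siblings' hypotheses makes this quantity at most $I(W_{l-1}^{i_{1:l-1}};W_l^{i_{1:l}}\mid\cdot)$. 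The conditioning is harmless because the siblings are conditionally independent of $U_l^{i_{1:l}}$ given the parent supersample.

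\textbf{Privacy step.} Assumption \ref{assumption:privacy} says the aggregation at layer $l$ is $\epsilon_l$-DP with respect to changing one child input, which is exactly the change induced by flipping $U_l^{i_{1:l}}$. Two bounds follow:
\begin{itemize}
\item The assumption already states $I\le\epsilon_l(e^{\epsilon_l}-1)$.
\item The standard bound $I\le\epsilon_l$ follows from bounding the KL divergence between the output distributions under $U=1$ and $U=2$. Their log-likelihood ratio is at most $\epsilon_l$ pointwise, and $I(W;U)$ is the average KL divergence to the mixture, which is therefore at most $\epsilon_l$.
\end{itemize}
Taking the minimum of the two gives the per-node bound. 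Substituting it gives $\sqrt{2\min(\cdot)}$ per layer, and the factor $\sqrt2\le 2$ yields the stated constant.

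\textbf{Main obstacle.} The delicate step is justifying that the DP guarantee, stated for neighbouring inputs of the aggregator, controls the mutual information with the binary selector $U_l^{i_{1:l}}$ \emph{conditionally} on the supersample and on the rest of the tree. It must also hold uniformly over the index $i_{1:l}$. I would handle this by fixing all other randomness, applying the pointwise likelihood-ratio bound to the two hypotheses $W_{l,1},W_{l,2}$ produced from the two candidate subtrees, and then averaging. Convexity of KL gives this averaging. For layer $L$ the children are data points, so DP applies verbatim.
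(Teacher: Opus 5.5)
Your proof is correct. It shares the paper's skeleton: start from Corollary~\ref{corollary:mutual_information_bound}, push the dependence on $U_l^{i_{1:l}}$ up the aggregation chain to the parent hypothesis $W_{l-1}^{i_{1:l-1}}$ by data processing, then convert $\epsilon_l$-DP into a mutual-information bound. The difference is that you handle the conditioning in the opposite direction.

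The paper removes the conditioning. Jensen moves $\E_{\tilde\mu_l^{i_{1:l}}}$ inside the square root, and the chain rule rewrites the averaged CMI as $I(W;U_l^{i_{1:l}},\tilde\mu_l^{i_{1:l}})-I(W;\tilde\mu_l^{i_{1:l}})$. The second term is dropped. The remaining unconditional quantity is pushed to $I(W_{l-1}^{i_{1:l-1}};\mu_l^{i_{1:l}})$ and then $I(W_{l-1}^{i_{1:l-1}};W_l^{i_{1:l}})$, which is bounded with \cite[Lemma 3.2]{wangGeneralizationFederatedLearning2025}.

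You instead add conditioning on the rest of the tree. This can only increase the mutual information, because $U_l^{i_{1:l}}$ is independent of all of it jointly with $\tilde\mu_l^{i_{1:l}}$. Under that conditioning, flipping $U_l^{i_{1:l}}$ changes exactly one input of the parent's aggregator while the other inputs are frozen. So the DP ratio applies directly, and your bound holds pointwise in $\tilde\mu_l^{i_{1:l}}$: no Jensen is needed, and you keep the sharper $\sqrt{2}$.

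What the paper's route buys is brevity and reuse of a cited lemma. What yours buys is that the DP step is actually justified. The unconditional quantities the paper passes through are not controlled by DP alone: the siblings' hypotheses reveal information about the common parent, and hence about $W_l^{i_{1:l}}$. A large number of Bernoulli children under a noisy-mean aggregator already makes $I(W_{l-1}^{i_{1:l-1}};W_l^{i_{1:l}})$ of constant order while $\epsilon_l$ is small. Your conditioning removes exactly this leakage.

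When you write it up, tighten three things.
\begin{enumerate}
\item Keep the conditioning on the rest of the tree in your displayed data-processing inequality. Given only $\tilde\mu_l^{i_{1:l}}$, conditioning on $W_{l-1}^{i_{1:l-1}}$ couples $U_l^{i_{1:l}}$ with the siblings. Through the shared ancestors it then couples $U_l^{i_{1:l}}$ with other branches that also feed $W$, so the chain $U_l^{i_{1:l}}\to W_{l-1}^{i_{1:l-1}}\to W$ need not be Markov.
\item When you fix ``all other randomness'', leave the parent aggregator's internal randomness free, since that is where the DP guarantee lives.
\item Derive the $\epsilon_l(e^{\epsilon_l}-1)$ bound from the pointwise likelihood ratio instead of quoting Assumption~\ref{assumption:privacy}. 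An $\epsilon$-indistinguishable pair has KL divergence at most $\epsilon(e^{\epsilon}-1)$, and each conditional law of $W_{l-1}^{i_{1:l-1}}$ is $\epsilon_l$-indistinguishable from the mixture. The assumption's mutual-information statement concerns the unconditional $I(W_{l-1}^{i_{1:l-1}};W_l^{i_{1:l}})$, which by the example above it does not in fact control, and it is not the conditional quantity you need.
\end{enumerate}
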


\section{Gaussian Location Model}
\label{section:gaussian_location_model}

We now analyze the effectiveness of our bounds for the Gaussian location model, where data points are sampled hierarchically and are therefore correlated. 
This particular model allows for exact computation of the generalization error, which allows us to compare our bounds with the correct baseline.

For simplicity, given that variances are fixed, we will identify Gaussian distributions with their mean, which will ease the notation. 

We consider a sampling tree of depth $L$ with root parameter $\theta$. At the first layer, we draw $n_1$ independent samples
\begin{equation}
    \mu_1^{i} \sim \mathcal{N}\!\bigl(\theta,\sigma_1^2\bigr), \qquad i=1,\dots,n_1,
\end{equation}
and, recursively, at layer $\ell=2,\dots,L$, each node at layer $\ell-1$ spawns $n_\ell$ children sampled according to
\begin{equation}
    \mu_\ell^{i_1,\dots,i_\ell} \sim \mathcal{N}\!\bigl(\mu_{\ell-1}^{i_1,\dots,i_{\ell-1}},\sigma_\ell^2\bigr),
    \qquad i_\ell=1,\dots,n_\ell.
\end{equation}

The hypothesis $W$ is the average of all the leaf nodes, which can be written as:
\begin{equation}
    W = \frac{1}{N_L} \sum_{i_{1:L}} \mu_L^{i_{1:L}} = \theta + \sum_{l=1}^L \frac{1}{N_l} \sum_{i_{1:l}} \epsilon_l^{i_{1:l}},
\end{equation}
where $\epsilon_l^{i_{1:l}} \sim \mathcal{N}(0, \sigma_l^2)$ are independent noise terms.

In this setting, we have that the true generalization error is given by:
\begin{equation}    
    \mathbb{E}_{W, \mu_L} gen(W, \mu_L) = \sqrt{\frac{2}{\pi}} \left( \sqrt{V + \Delta} - \sqrt{V - \Delta} \right),
\end{equation}
where $N_l = \prod_{k=1}^l n_k$, $V = \sum_l \sigma_l^2$ and $\Delta = \sum_l \frac{\sigma_l^2}{N_l}$.
Expanding with Taylor series around $\Delta = 0$ we get:
\begin{align}
    \mathbb{E}_{W, \mu_L} gen(W, \mu_L) &= \sqrt{\frac{2}{\pi}} \left( \frac{\Delta}{\sqrt{V}} + O \left(\frac{\Delta^3}{V^{5/2}}\right) \right) \\  
    &\approx \sqrt{\frac{2}{\pi}} \left( \frac{\sum_l \frac{\sigma_l^2}{N_l}}{\sqrt{\sum_l \sigma_l^2}} \right).
\end{align}
 
While, computing the Wasserstein bound with Theorem \ref{thm:abstract_gen_bound_wasserstein} gives:
\begin{equation}
    \mathbb{E}_{W, \mu_L} gen(W, \mu_L) \leq \frac{2}{\sqrt{\pi}} \sum_{l=1}^L \frac{\sigma_l}{N_l}.
\end{equation}

We start by observing that in $L=1$ case, our bound coincides with the true generalization error to 
first order, up to a factor of $\sqrt{2}$.

Adding more layers introduces a larger gap, due to $\sum_{l=1}^L \sigma_l \ge \sum_{l=1}^L \sigma_l^2 / \sqrt{\sum_{l=1}^L \sigma_l^2}$ 
becoming looser.

In particular we also gain an insight on the scaling: The two bounds are the closest when the variance of a single layer dominates.

Specifically, in the homogeneous $\sigma_l = \sigma$ case, we get that the true generalization error is given by:
\begin{equation}
    \mathbb{E}_{W, \mu_L} gen(W, \mu_L) \approx \sqrt{\frac{2}{\pi}} \left( \frac{\sigma^2 \sum_l \frac{1}{N_l}}{\sqrt{L} \sigma} \right ) = \frac{ \sqrt{2} \sigma}{\sqrt{\pi L}} \sum_{l=1}^L \frac{1}{N_l},
\end{equation}
while our bound gives:
\begin{equation}
    \label{eq:GLM_wasserstein}
    \mathbb{E}_{W, \mu_L} gen(W, \mu_L) \leq \frac{2 \sigma }{\sqrt{\pi}}  \sum_{l=1}^L \frac{1}{N_l}.
\end{equation}

Which means that the bound qualitatively captures the scaling with the number of data points but misses the scaling with the number of layers.
For this specific model, the bound from Lemma \ref{lemma:bound_with_different_conditioning} yields
the same expression as Equation \ref{eq:GLM_wasserstein}.


\begin{figure}[htbp]
  \centering
  \includegraphics[width=0.5\textwidth]{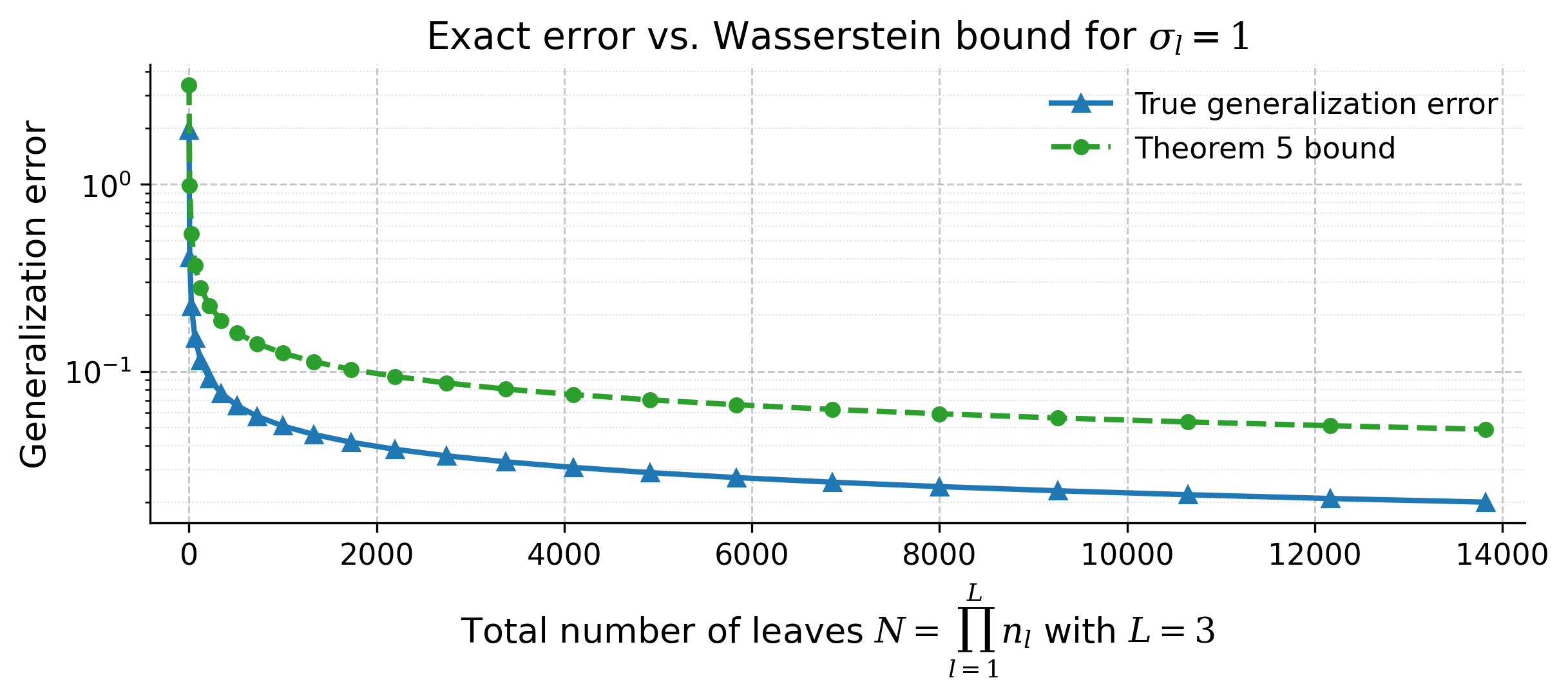}
  \caption{Comparison between the true generalization error and our bound from Theorem \ref{thm:abstract_gen_bound_wasserstein} in the Gaussian Location model.}
  \label{fig:sim}
\end{figure}

\section{Discussion}
\label{section:discussion}

We have presented an abstract formulation of the Hierarchical Federated Learning setup that encloses multiple existing frameworks.
We then derive a generalization bound in terms of the Wasserstein distance by assuming a Lipschitz condition on the loss function and by using a supersample construction.
This bound implies as special cases existing CMI bounds in the literature, but it is more general and can be applied to any hierarchical sampling procedure.
We also compared the bound with the true generalization error in the Gaussian Location Model, showing that it captures the scaling with the number of data points, 
but misses the scaling with the depth of the sampling tree. We also showed that the framework is suitable for deriving generalization bounds from privacy assumptions, and we recovered
a generalization bound based on a local differential privacy assumption on the aggregation steps.
A possible future direction is to extend the framework to general tree structures, in order to enlarge the sampling procedures covered by our bound.
We acknowledge that our bound does not take into consideration the communication structure and/or computational limitations of each node. We addressed this problem
when proving the implication in Corollary \ref{corollary:kavian_implication}, showing that adding independence conditions to model privacy is straightforward
and can be further explored.

\bibliography{MyLibrary,extrarefs}

\newpage
\onecolumn
\appendices

\section{Extra preliminaries and definition}
\label{app:preliminaries}
In this section we breifly overview some of the definitions and results that we will use for the proofs.
\subsection{Information theory}

\begin{definition}[KL divergence]
    Given two probability distributions $P$ and $Q$ over the same space $\mathcal{X}$, the Kullback-Leibler divergence from $Q$ to $P$ is defined as
    \begin{equation}
        D_{KL}(P || Q) = \int_{\mathcal{X}} \log\left( \frac{dP}{dQ}(x) \right) dP(x),
    \end{equation}
    where $\frac{dP}{dQ}$ is the Radon-Nikodym derivative of $P$ with respect to $Q$.    
\end{definition}

\begin{definition}[Mutual information]
    Given two random variables $X$ and $Y$, the mutual information between them is defined as
    \begin{equation}
        I(X; Y) = D_{KL}(P_{XY} || P_X \otimes P_Y),
    \end{equation}
    where $D_{KL}$ is the Kullback-Leibler divergence.
\end{definition}

\begin{definition}[Conditional mutual information]
    Given three random variables $X$, $Y$ and $Z$, the conditional mutual information between $X$ and $Y$ given $Z$ is defined as
    \begin{equation}
        I(X; Y | Z) = \E_{Z} [ D_{KL}(P_{XY|Z} || P_{X|Z} \otimes P_{Y|Z}) ].
    \end{equation}
\end{definition}

\begin{definition}
    [Total variation distance]
    Given two probability distributions $P$ and $Q$ over the same space $\mathcal{X}$, the total variation distance between $P$ and $Q$ is defined as
    \begin{equation}
        \| P - Q \|_{TV} = \sup_{A \subseteq \mathcal{X}} |P(A) - Q(A)| = \frac{1}{2} \int_{\mathcal{X}} |dP(x) - dQ(x)|.
    \end{equation}
\end{definition}

\begin{theorem}[Pinsker's inequality]
    For any two probability distributions $P$ and $Q$ over the same space $\mathcal{X}$, the total variation distance between $P$ and $Q$ is bounded by the square root of the KL divergence:
    \begin{equation}
        \| P - Q \|_{TV} \leq \sqrt{\frac{1}{2} D_{KL}(P || Q)}.
    \end{equation}
\end{theorem}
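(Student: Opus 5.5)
We prove the inequality in the equivalent form $D_{KL}(P\|Q)\ge 2\,\|P-Q\|_{TV}^2$. The plan is to reduce to two-point distributions via a data processing argument, and then verify the two-point case by one-variable calculus.

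If $P$ is not absolutely continuous with respect to $Q$, then $D_{KL}(P\|Q)=+\infty$ and there is nothing to prove. So assume $P\ll Q$. By the definition of total variation distance it suffices to show that, for every measurable $A\subseteq\mathcal{X}$,
\begin{equation}
|P(A)-Q(A)|\le\sqrt{\tfrac12 D_{KL}(P\|Q)}.
\end{equation}
Taking the supremum over $A$ then gives the claim.

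\textbf{Step 1 (reduction to two points).} Fix $A$ and set $p=P(A)$, $q=Q(A)$. Consider the deterministic map $x\mapsto \mathbf{1}\{x\in A\}$. It pushes $P$ and $Q$ forward to the Bernoulli laws $\mathrm{Ber}(p)$ and $\mathrm{Ber}(q)$. We need
\begin{equation}
D_{KL}(P\|Q)\ge d(p\|q):=p\log\tfrac{p}{q}+(1-p)\log\tfrac{1-p}{1-q}.
\end{equation}
Write $f=\frac{dP}{dQ}$ and split the integral defining $D_{KL}(P\|Q)$ over $A$ and $A^c$. On $A$, when $q>0$, apply Jensen's inequality to the convex function $t\mapsto t\log t$ under the normalized measure $Q(\cdot\cap A)/q$:
\begin{equation}
\int_A f\log f\,dQ \;\ge\; q\cdot\frac{p}{q}\log\frac{p}{q} \;=\; p\log\frac{p}{q}.
\end{equation}
The same argument on $A^c$ gives the second term of $d(p\|q)$. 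If $q=0$, then $p=0$ by absolute continuity, and that term vanishes. This step is the log-sum inequality in integral form. It is the only measure-theoretic point that needs care, namely the convention $0\log 0=0$ and the degenerate cases $q\in\{0,1\}$.

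\textbf{Step 2 (binary Pinsker).} It remains to show $d(p\|q)\ge 2(p-q)^2$ for $p\in[0,1]$ and $q\in(0,1)$. The endpoint cases $q\in\{0,1\}$ with $p\ne q$ are excluded by absolute continuity, and $p=q$ is trivial. Fix $p$ and define
\begin{equation}
g(q)=d(p\|q)-2(p-q)^2 .
\end{equation}
Then $g(p)=0$, and a direct computation gives
\begin{equation}
g'(q)=-\frac{p}{q}+\frac{1-p}{1-q}+4(p-q)=(q-p)\Bigl(\frac{1}{q(1-q)}-4\Bigr).
\end{equation}
Since $q(1-q)\le \tfrac14$, the bracket is nonnegative. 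Hence $g'$ has the sign of $q-p$, so $g$ is nonincreasing on $(0,p]$ and nondecreasing on $[p,1)$. Its minimum on $(0,1)$ is therefore $g(p)=0$, which gives $g\ge 0$.

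\textbf{Conclusion.} Combining the two steps,
\begin{equation}
|P(A)-Q(A)|^2=(p-q)^2\le\tfrac12 d(p\|q)\le\tfrac12 D_{KL}(P\|Q)
\end{equation}
for every measurable $A$. Taking square roots and the supremum over $A$ yields Pinsker's inequality.

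The step where we expect the main obstacle is Step 1: justifying the data processing (log-sum) inequality rigorously for general measures, including the null-set and degenerate cases. The calculus in Step 2 is routine once the factorization of $g'$ is found.
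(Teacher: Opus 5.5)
The paper gives no proof of this statement. Pinsker's inequality is listed in Appendix~\ref{app:preliminaries} as standard background and used as a black box, together with $\mathbb{W}=\|\cdot\|_{TV}$ under the discrete metric, in the proofs of Corollaries~\ref{corollary:wang_implication} and~\ref{corollary:kavian_implication}.

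Your argument is the classical self-contained proof, and it is correct. You reduce to Bernoulli laws by data processing through $x\mapsto\mathbf{1}\{x\in A\}$, then finish with one-variable calculus. The Jensen step on $A$ and $A^c$ is right, and absolute continuity correctly disposes of $q\in\{0,1\}$. The factorization $g'(q)=(q-p)\bigl(\frac{1}{q(1-q)}-4\bigr)$ checks out.

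Two details deserve one line each:
\begin{itemize}
\item Splitting $\int f\log f\,dQ$ over $A$ and $A^c$ is legitimate because $t\log t\ge -1/e$. So the negative part is $Q$-integrable and the integral is well defined in $(-\infty,+\infty]$.
\item When $p\in\{0,1\}$ the point $q=p$ lies outside $(0,1)$. Instead of ``the minimum is $g(p)=0$'', say that $g$ is monotone on $(0,1)$ with $g(q)\to 0$ as $q\to p$. For example, for $p=0$, $g(q)=-\log(1-q)-2q^2$ is nondecreasing and tends to $0$ at $0^{+}$.
\end{itemize}

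Compared with the paper's citation, your route buys self-containment at little cost. If you want a proof closer to the paper's variational viewpoint, there is an alternative. Apply the Donsker--Varadhan formula to $h=\lambda\mathbf{1}_A$, and bound $\log\E_Q[e^{h}]\le\lambda q+\lambda^2/8$ by Hoeffding's lemma. This gives $D_{KL}(P\|Q)\ge\lambda(p-q)-\lambda^2/8$, hence $2(p-q)^2$ at $\lambda=4(p-q)$.
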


\begin{definition}
    [Differential privacy]
    A randomized algorithm $\mathcal{A}: \mathcal{D} \to \mathcal{W}$ is said to be $(\epsilon, \delta)$-differentially private if for any two neighboring datasets $D$ and $D'$ differing in at most one element, and for any output set $S \subseteq \mathcal{W}$,
    \begin{equation}
        \Pr[\mathcal{A}(D) \in S] \leq e^{\epsilon} \Pr[\mathcal{A}(D') \in S] + \delta.
    \end{equation}
\end{definition}

\subsection{Wasserstein Distance}
\begin{definition}
    [Wasserstein distance as optimal transport cost]
    Given two probability distributions $P$ and $Q$ over a metric space $(\mathcal{X}, \rho)$, the Wasserstein distance of order 1 between $P$ and $Q$ is defined as
    \begin{equation}
        \mathbb{W}(P, Q) = \inf_{\pi \in \Pi(P, Q)} \E_{(X, Y) \sim \pi} [\rho(X, Y)],
    \end{equation}
    where $\Pi(P, Q)$ is the set of all couplings of $P$ and $Q$, i.e., the set of all joint distributions $\pi$ on $\mathcal{X} \times \mathcal{X}$ such that the marginals of $\pi$ are $P$ and $Q$ respectively. 
\end{definition}

\begin{theorem}
    [Kantorovich-Rubinstein duality]
    The Wasserstein distance can also be expressed in dual form as
    \begin{equation}
        \mathbb{W}(P, Q) = \sup_{f: \text{Lip}(f) \leq 1} \left[ \E_{X \sim P}[f(X)] - \E_{Y \sim Q}[f(Y)] \right],
    \end{equation}
    where the supremum is taken over all 1-Lipschitz functions $f: \mathcal{X} \to \R$.
\end{theorem}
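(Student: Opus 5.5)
We would prove the two inequalities separately, writing $\mathbb{W}_{\mathrm{dual}}(P,Q)$ for the supremum on the right-hand side. We work under the standing assumptions implicit in the definition: $(\mathcal{X},\rho)$ is a Polish metric space, and $P,Q$ have finite first moments. The second assumption ensures that every $1$-Lipschitz $f$ is integrable and that both sides are finite.

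\emph{Easy direction} ($\mathbb{W}_{\mathrm{dual}} \le \mathbb{W}$). Fix any coupling $\pi\in\Pi(P,Q)$ and any $f$ with $\mathrm{Lip}(f)\le 1$. Since the marginals of $\pi$ are $P$ and $Q$,
\begin{equation*}
\E_{P}[f]-\E_{Q}[f]=\E_{(X,Y)\sim\pi}[f(X)-f(Y)]\le \E_{\pi}[\rho(X,Y)].
\end{equation*}
Taking the supremum over $f$ and then the infimum over $\pi$ gives the claim. We record a by-product for later: for every $1$-Lipschitz $f$ and any $P,P'$, we have $\bigl|\E_P f-\E_{P'} f\bigr|\le \mathbb{W}(P,P')$.

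\emph{Finitely supported case.} Suppose $P=\sum_i p_i\delta_{x_i}$ and $Q=\sum_j q_j\delta_{y_j}$. Then $\mathbb{W}(P,Q)$ is the value of the finite transport linear program
\begin{equation*}
\min\Bigl\{\sum_{i,j}\pi_{ij}\rho(x_i,y_j) \;:\; \pi\ge 0,\ \sum_j\pi_{ij}=p_i,\ \sum_i\pi_{ij}=q_j\Bigr\}.
\end{equation*}
By LP strong duality, its value equals
\begin{equation*}
\max\Bigl\{\sum_i p_i\varphi_i-\sum_j q_j\psi_j \;:\; \varphi_i-\psi_j\le\rho(x_i,y_j)\Bigr\}.
\end{equation*}
Next we perform the $c$-transform trick on the finite set $S=\{x_i\}\cup\{y_j\}$. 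Set $f(z)=\min_i\bigl(\rho(z,x_i)\dots\bigr)$; more precisely, define $g(y)=\max_i\bigl(\varphi_i-\rho(x_i,y)\bigr)$. Then $g$ is $1$-Lipschitz and satisfies $g(y_j)\le\psi_j$. Moreover $g(x_i)\ge\varphi_i$, by taking the $i$-th term of the maximum. Since $g$ is a maximum of $1$-Lipschitz functions, it is defined and $1$-Lipschitz on all of $\mathcal{X}$; this plays the role of a McShane extension. Therefore
\begin{equation*}
\E_P g-\E_Q g\ \ge\ \sum_i p_i\varphi_i-\sum_j q_j\psi_j=\mathbb{W}(P,Q).
\end{equation*}
This yields $\mathbb{W}\le\mathbb{W}_{\mathrm{dual}}$ for discrete measures.

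\emph{Approximation} (main obstacle). For general $P,Q$ we pick finitely supported $P_n,Q_n$ with $\mathbb{W}(P,P_n)\to 0$ and $\mathbb{W}(Q,Q_n)\to 0$. This is the step requiring care. We would first truncate to a compact set, using tightness on a Polish space together with the finite first moment to control the tail mass times distance. On that compact set we would then quantize to a finite $\varepsilon$-net by mapping each point to a nearest net point, which gives a coupling of cost at most $\varepsilon$. Gluing couplings shows that the primal $\mathbb{W}$ satisfies the triangle inequality, so
\begin{equation*}
\mathbb{W}(P,Q)\le\mathbb{W}(P,P_n)+\mathbb{W}(P_n,Q_n)+\mathbb{W}(Q_n,Q).
\end{equation*}
By the discrete case there is a $1$-Lipschitz $g_n$ with $\mathbb{W}(P_n,Q_n)=\E_{P_n}g_n-\E_{Q_n}g_n$. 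By the by-product of the easy direction, this quantity is at most $\E_P g_n-\E_Q g_n+\mathbb{W}(P,P_n)+\mathbb{W}(Q,Q_n)$. Hence
\begin{equation*}
\mathbb{W}(P,Q)\le \mathbb{W}_{\mathrm{dual}}(P,Q)+2\bigl(\mathbb{W}(P,P_n)+\mathbb{W}(Q,Q_n)\bigr),
\end{equation*}
and letting $n\to\infty$ closes the argument. We expect the delicate points to be the gluing lemma, which is where the Polish assumption is needed for disintegration, and the tail control in the quantization step.
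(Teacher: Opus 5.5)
The paper gives no proof of this statement. It is listed in Appendix~A as standard optimal-transport background and used as a black box, so there is no argument of the paper's to compare against step by step.

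Your proof is correct and follows the standard elementary route:
\begin{itemize}
\item weak duality from an arbitrary coupling;
\item exact duality for finitely supported measures, via finite LP duality and the $c$-transform $g(y)=\max_i(\varphi_i-\rho(x_i,y))$;
\item a density argument in $\mathbb{W}$ using your by-product $|\E_P f-\E_{P'}f|\le\mathbb{W}(P,P')$.
\end{itemize}
The hypotheses you make explicit (Polish space, finite first moments) are exactly the ones the paper leaves implicit.

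Two small points. First, delete the aborted sentence defining $f(z)=\min_i(\rho(z,x_i)\dots)$; the $g$ that follows is what you actually use. Second, the step you flag as delicate is easier than you suggest. Build $P_n=T_{\#}P$ and $Q_n=S_{\#}Q$ from quantization maps with finitely many values. For instance, take $T(x)=d_{k(x)}$ with $k(x)=\min\{k:\rho(x,d_k)<\varepsilon\}$ for a countable dense set $\{d_k\}$. Send $\{k(x)>M\}$ to a fixed point, and control that tail cost by dominated convergence. Then the gluing only requires conditioning on finitely many atoms of positive mass. So no disintegration theorem is needed, and separability suffices in place of compact truncation.

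It is also worth seeing how your proof relates to what the paper actually needs. Every time the paper invokes the duality (the proof of Theorem~\ref{thm:abstract_gen_bound_wasserstein} and the corollary proofs, including the participation-gap argument behind Lemma~\ref{lemma:bound_with_different_conditioning}), it is only in the form $\E_P f-\E_Q f\le \mathrm{Lip}(f)\,\mathbb{W}(P,Q)$. That is precisely your easy direction. It holds for any coupling, with no topological assumptions beyond integrability. The hard direction is never used for the paper's results, and with it neither are LP duality, the approximation step, or the Polish and moment hypotheses. Your first paragraph alone would justify every application made in the paper.
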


\begin{theorem}
    [Wasserstein distance and total variation distance]
    When the metric of the underlying metric space $(\mathcal{X}, \rho)$ is the discrete metric, i.e., $\rho(x, y) = 1$ if $x \neq y$ and $\rho(x, x) = 0$, then the Wasserstein distance between two probability distributions $P$ and $Q$ is equal to the total variation distance between $P$ and $Q$:
    \begin{equation}
        \mathbb{W}(P, Q) = \| P - Q \|_{TV}.
    \end{equation}
\end{theorem}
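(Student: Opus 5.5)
The plan is to prove the two inequalities $\mathbb{W}(P,Q) \geq \|P-Q\|_{TV}$ and $\mathbb{W}(P,Q) \leq \|P-Q\|_{TV}$ separately. For the first I use the coupling definition; for the second I exhibit an explicit coupling (the maximal coupling). An alternative route through the Kantorovich--Rubinstein duality is sketched at the end as a cross-check.

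\emph{Lower bound.} With the discrete metric, for any coupling $\pi \in \Pi(P,Q)$ we have $\E_{(X,Y)\sim\pi}[\rho(X,Y)] = \pi(X \neq Y)$. For any measurable $A \subseteq \mathcal{X}$,
\begin{equation}
P(A) - Q(A) = \pi(X\in A) - \pi(Y \in A) \leq \pi(X \in A, Y \notin A) \leq \pi(X \neq Y).
\end{equation}
Taking the supremum over $A$, and then the infimum over $\pi$, gives $\|P-Q\|_{TV} \leq \mathbb{W}(P,Q)$.

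\emph{Upper bound.} Let $\nu = P + Q$ and write $p = dP/d\nu$ and $q = dQ/d\nu$. Set $m = \min(p,q)\,\nu$, whose total mass is $1 - \|P-Q\|_{TV}$; this uses the identity $\int \min(p,q)\,d\nu = 1 - \tfrac12\int|p-q|\,d\nu$. If $\|P-Q\|_{TV}=0$, the diagonal coupling $(X,X)$ with $X\sim P$ gives cost zero. Otherwise define
\begin{equation}
\pi = (\mathrm{id},\mathrm{id})_\# m + \frac{(P-m)\otimes(Q-m)}{\|P-Q\|_{TV}}.
\end{equation}
Both $P-m$ and $Q-m$ are nonnegative measures of mass $\|P-Q\|_{TV}$, so $\pi$ has marginals $P$ and $Q$. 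The diagonal part costs nothing, so $\pi(X\neq Y) \leq \|P-Q\|_{TV}$, which gives $\mathbb{W}(P,Q)\leq \|P-Q\|_{TV}$.

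The main obstacle is technical rather than conceptual. The diagonal $\{x=y\}$ must be measurable so that $\rho$ is measurable and the transport cost is well defined; this holds on standard Borel spaces, which I will assume. The remaining steps are routine.

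As a cross-check through the Kantorovich--Rubinstein duality stated just above: a function is $1$-Lipschitz for the discrete metric exactly when $\sup f - \inf f \leq 1$. Since $\E_P f - \E_Q f$ is invariant under adding a constant to $f$, the supremum can be restricted to $f$ with values in $[0,1]$. The supremum of $\int f\,(p-q)\,d\nu$ over such $f$ is attained at $f = \mathbf{1}_{\{p>q\}}$, and it equals $\|P-Q\|_{TV}$. This agrees with the coupling argument.
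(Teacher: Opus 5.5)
Your proof is correct, and the paper has no proof to compare it with. The statement sits in Appendix A among background facts (with Pinsker's inequality and Kantorovich--Rubinstein duality) and is quoted without proof. It is used only in the bounded-loss corollaries: a $[0,1]$-valued loss is $1$-Lipschitz for the discrete metric, and the paper then applies $\mathbb{W}\le\|P-Q\|_{TV}\le\sqrt{D_{KL}(P\|Q)/2}$. The direction needed there is $\mathbb{W}\le\|P-Q\|_{TV}$, which is your maximal-coupling half, not the easy one.

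Your two halves split cleanly. The lower bound is exactly the trivial half of Kantorovich--Rubinstein duality, applied to the $1$-Lipschitz test functions $\mathbf{1}_A$. The explicit coupling does constructively what the nontrivial half of the duality theorem would otherwise provide. So your primal argument needs only measurability of the diagonal. Your dual cross-check instead inherits the hypotheses of the duality theorem, for instance a Polish space with a lower semicontinuous cost; the discrete metric qualifies because the diagonal is closed. Together with your lower bound, the construction also shows that the infimum defining $\mathbb{W}$ is attained.

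Your measurability caveat is the right reading of the statement. Taken literally, the Borel $\sigma$-algebra generated by the discrete metric is the whole power set. The result should be understood as in the paper's use on $\mathcal{W}$: keep the original standard Borel structure and use $\rho$ only as a transport cost.
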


\section{Proofs of the main results}
\label{app:proofs_main_results}

\begin{proof}[Proof of Proposition~\ref{prop:gen_bound_decomposition}]
    Proof of Proposition \ref{prop:gen_bound_decomposition}.
    Starting from the definition of the generalization error \eqref{eq:gen_error_definition}, we can add and subtract the following quantity:
    \begin{equation}
        \sum_{l=1}^{L-1} \left( \frac{1}{N_l} \sum_{i_l^{1:l}} \E_{W, \mu} \E_{\mu^\text{test}_L \sim \mu^{i_1:l}_{l}} \left[ \ell(W, \mu^\text{test}_L) \right] \right),
    \end{equation}
    we then collect the terms telescopically to get the following: 
    \begin{align}
        \text{gen}(P_{W|\mu}) &= \sum_{l=0}^{L-1} \frac{1}{N_l} \left[ \sum_{i_l^{1:l}} \E_{W, \mu} \E_{\mu^\text{test}_L \sim \mu^{i_1:l}_{l}} \left[ \ell(W, \mu^\text{test}_L) \right] - \frac{1}{n_{l+1}} \sum_{i_{l+1}^{1:l+1}} \E_{W, \mu} \E_{\mu^\text{test}_L \sim \mu_{l+1}^{i_{1:l+1}}} \left[ \ell(W, \mu^\text{test}_L) \right]  \right] \\
        &\leq \sum_{l=0}^{L-1} \frac{1}{N_l} \sum_{i_l^{1:l}} \left| \E_{W, \mu} \E_{\mu^\text{test}_L \sim \mu^{i_1:l}_{l}} \left[ \ell(W, \mu^\text{test}_L) \right] - \frac{1}{n_{l+1}} \sum_{i_{l+1}^{l+1}} \E_{W, \mu} \E_{\mu^\text{test}_L \sim \mu_{l+1}^{i_{1:l+1}}} \left[ \ell(W, \mu^\text{test}_L) \right]  \right|, \label{eq:proof_appendix_error_decomp}
    \end{align}
    where we applied the triangle inequality to get the desired result.

    The reason for the assumption of uniform branching factor is that it allows to split the error contribution of each branch 
    equally. Continuing from equation \eqref{eq:proof_appendix_error_decomp}, we then introduce the supersample and get a clean 1-to-1 sum of contributions, while if we had an unbalanced tree we would
    have differently scaled loss functions.
\end{proof}

\begin{proof}[    Proof of Lemma \ref{lemma:Lipschitz_condition_expected_value}]
    For all $w, w' \in \mathcal{W}$ and $P \in \mathcal{P}(\mathcal{Z})$, we have:
    \begin{align}
        |f(w, P) - f(w', P)| &= |\E_{z \sim P}[\ell(w, z)] - \E_{z \sim P}[\ell(w', z)]| \\
        &\leq \E_{z \sim P}[|\ell(w, z) - \ell(w', z)|] \\
        &\leq L \rho(w, w'),
    \end{align}
    where we applied the triangle inequality and then the Lipschitz assumption on the loss function.
\end{proof}

\begin{proof}[Proof of theorem \ref{thm:abstract_gen_bound_wasserstein}]
    Starting from the decomposition in Proposition \ref{prop:gen_bound_decomposition}, we can rewrite it in terms of the supersample:
    \begin{equation}
        |\text{gen}(P_{W|\mu}) |  \leq \sum_{l=0}^{L-1} \frac{1}{N_l} \sum_{i_l^{1:l}} \left| \E_{W, \tilde\mu, U} \E_{\mu^\text{test}_L \sim \mu^{i_1:l}_{l, U_l^{i_1:l}}} \left[ \ell(W, \mu^\text{test}_L) \right] - \frac{1}{n_{l+1}} \sum_{i_{l+1}^{l+1}} \E_{W, \tilde\mu, U} \E_{\mu^\text{test}_L \sim \mu_{l+1, U_{l+1}^{i_{1:l+1}}}^{i_{1:l+1}}} \left[ \ell(W, \mu^\text{test}_L) \right]  \right|
    \end{equation}
    by observing that $\mu^\text{test}_L \sim \mu^{i_1:l}_{l, U_l^{i_1:l}}$ has the same distribution as $\mu^\text{test}_L \sim \mu_{l+1, \bar U_{l+1}^{i_{1:l+1}}}^{i_{1:l+1}}$ and both are independent of $W$.
    We then get:
    \begin{align}
        |\text{gen}(P_{W|\mu}) |  &\leq \sum_{l=0}^{L-1} \frac{1}{N_l} \sum_{i_l^{1:l}} \left| \frac{1}{n_{l+1}} \sum_{i_{l+1}^{l+1}} \E_{W, \tilde\mu, U} \E_{\mu^\text{test}_L \sim \mu_{l+1, \bar U_{l+1}^{i_{1:l+1}}}^{i_{1:l+1}}} \left[ \ell(W, \mu^\text{test}_L) \right] - \frac{1}{n_{l+1}} \sum_{i_{l+1}^{l+1}} \E_{W, \tilde\mu, U} \E_{\mu^\text{test}_L \sim \mu_{l+1, U_{l+1}^{i_{1:l+1}}}^{i_{1:l+1}}} \left[ \ell(W, \mu^\text{test}_L) \right]  \right| \\
        &\leq \sum_{l=0}^{L-1} \frac{1}{N_{l+1}} \sum_{i_{l+1}^{1:l+1}} \left| \E_{W, \tilde\mu, U} \E_{\mu^\text{test}_L \sim \mu_{l+1, \bar U_{l+1}^{i_{1:l+1}}}^{i_{1:l+1}}} \left[ \ell(W, \mu^\text{test}_L) \right] - \E_{W, \tilde\mu, U} \E_{\mu^\text{test}_L \sim \mu_{l+1, U_{l+1}^{i_{1:l+1}}}^{i_{1:l+1}}} \left[ \ell(W, \mu^\text{test}_L) \right]  \right| \label{eq:gen_bound_decomposition_abstract_proof}.
    \end{align}
    Now we can add the following quantity in each term of the sum:
    \begin{equation}
        \label{eq:w_prime_abstract_proof_main_eq}
        \E_{W', \tilde\mu, U} \left[ \E_{\mu^\text{test}_L \sim \mu_{l+1, \bar U_{l+1}^{i_{1:l+1}}}^{i_{1:l+1}}} \left[ \E_{W' \sim P_{W|\tilde\mu, U}}[\ell(W', \mu^\text{test}_L)] \right] - \E_{\mu^\text{test}_L \sim \mu_{l+1, U_{l+1}^{i_{1:l+1}}}^{i_{1:l+1}}} \left[ \E_{W' \sim P_{W|\tilde\mu, U}}[\ell(W', \mu^\text{test}_L)] \right]       \right],
    \end{equation}
    where $W'$ is a copy of $W$ that is independent of $W$ and $U_{l+1}^{i_{1:l+1}}$ but dependent of $\tilde\mu$ so $W' \sim P_{W|\tilde\mu} \otimes P_{U_{l+1}^{i_{1:l+1}}}$.
    Therefore, $W'$ is a mixture of an hypothesis trained with $U_{l+1}^{i_{1:l+1}} = 1$ and an hypothesis trained with $U_{l+1}^{i_{1:l+1}} = 2$,
    so Equation \eqref{eq:w_prime_abstract_proof_main_eq} is equal to zero. 
    Combining with Equation \eqref{eq:gen_bound_decomposition_abstract_proof}, for each $l$ and $i_{l+1}^{i_{1:l+1}}$ we get the following two terms:

    \begin{equation}
        \label{eq:proof_abstract_generalization_test_term}
       \E_{W, W', \tilde\mu, U} \E_{\mu^\text{test}_L \sim \mu_{l+1, \bar U_{l+1}^{i_{1:l+1}}}^{i_{1:l+1}}} \left[ \ell(W, \mu^\text{test}_L) - \ell(W', \mu^\text{test}_L) \right],
    \end{equation}
    \begin{equation}
        \label{eq:proof_abstract_generalization_train_term}
       \E_{W, W', \tilde\mu, U} \E_{\mu^\text{test}_L \sim \mu_{l+1, U_{l+1}^{i_{1:l+1}}}^{i_{1:l+1}}} \left[ \ell(W', \mu^\text{test}_L) - \ell(W, \mu^\text{test}_L) \right].
    \end{equation}

    The only difference is that the term in Equation \eqref{eq:proof_abstract_generalization_test_term} is the expected value using the test node of the supersample, while the term in \eqref{eq:proof_abstract_generalization_train_term} uses the train node.
    By Lemma \ref{lemma:Lipschitz_condition_expected_value} the function inside equations \eqref{eq:proof_abstract_generalization_test_term} and \eqref{eq:proof_abstract_generalization_train_term} is $L$-Lipschitz in its first argument, so we can change the order of the expected value and apply 
    the Kantorovich-Rubinstein duality to get:
    \begin{align}
        \label{eq:proof_wasserstein_abstract_before_kr_duality}
        \E_{\tilde\mu, U} \left[ \E_{W} \E_{\mu^\text{test}_L \sim \mu_{l+1, \bar U_{l+1}^{i_{1:l+1}}}^{i_{1:l+1}}} \ell(W, \mu^\text{test}_L) - \E_{W'}\E_{\mu^\text{test}_L \sim \mu_{l+1, \bar U_{l+1}^{i_{1:l+1}}}^{i_{1:l+1}}} \ell(W', \mu^\text{test}_L) \right] &\leq \\
        L \E_{\tilde\mu_{l+1}^{i_{1:l+1}}, U_{l+1}^{i_{1:l+1}}} \left[ \mathbb{W} \left( P_{W|\tilde\mu_{l+1}^{i_{1:l+1}}, U_{l+1}^{i_{1:l+1}}}, P_{W|\tilde\mu_{l+1}^{i_{1:l+1}}} \right) \right] &.
    \end{align}
    For Equation \eqref{eq:proof_abstract_generalization_train_term} we can perform the same steps and get the same bound, which then substituting into \eqref{eq:gen_bound_decomposition_abstract_proof} gives:
    \begin{equation}
        |\text{gen}(P_{W|\mu}) |  \leq 2L \sum_{l=0}^{L-1} \frac{1}{N_{l+1}} \sum_{i_{l+1}^{1:l+1}} \left| \mathbb{E}_{\tilde \mu_{l+1}^{i_{1:l+1}}, U_{l+1}^{i_{1:l+1}}} \mathbb{W} \left( P_{W|\tilde \mu_{l+1}^{i_{1:l+1}}, U_{l+1}^{i_{1:l+1}}}, P_{W|\tilde \mu_{l+1}^{i_{1:l+1}}} \right) \right|
    \end{equation}
    and, up to shifting all the indices by one, we get the desired result.
\end{proof}

\begin{proposition}
    \label{prop:conditioning_increases_wasserstein}
    Given i.i.d. variables $X_i \sim P_X$ and $W \sim P_{W|X_{1:n}}$, calling $X = (X_1, \ldots, X_n)$ then the following holds (under regularity assumptions): 
    \begin{equation}
        \mathbb{E}_{X_i } \mathbb{W}(P_{W|X_i}, P_W) \leq \mathbb{E}_{X} \mathbb{W}(P_{W|X}, P_W) \text{ for all } i \in [n],
    \end{equation}
    which implies:
    \begin{equation}
        \frac{1}{n} \sum_{i=1}^n \mathbb{E}_{X_i } \mathbb{W}(P_{W|X_i}, P_W) \leq \mathbb{E}_{X} \mathbb{W}(P_{W|X}, P_W).
    \end{equation}
\end{proposition}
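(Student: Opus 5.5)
The plan is to show that the Wasserstein distance between a conditional and the marginal law is monotone under refining the conditioning $\sigma$-algebra. The tool is joint convexity of $\mathbb{W}$, in the form of Jensen's inequality applied to a mixture. The key observation is that $P_{W|X_i}$ is itself a mixture of the finer conditionals:
\begin{equation}
P_{W|X_i=x_i} = \E_{X_{-i}}\bigl[P_{W|X_i=x_i, X_{-i}}\bigr] = \E_{X_{-i}}\bigl[P_{W|X}\bigr],
\end{equation}
where $X_{-i}$ collects the remaining coordinates. Because the $X_j$ are i.i.d., $X_{-i}$ is independent of $X_i$, so the mixing law does not depend on $x_i$; in fact the argument only needs the regular conditional $P_{X_{-i}|X_i}$. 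The target distribution $P_W$ is fixed, that is, the same for every value of $X_{-i}$.

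The first step is convexity. The map $P \mapsto \mathbb{W}(P, Q)$ is convex for fixed $Q$, and the same holds for arbitrary (integral) mixtures. This is immediate from the Kantorovich--Rubinstein duality stated above: for any $1$-Lipschitz $f$,
\begin{equation}
\E_{P_{W|X_i}}[f] - \E_{P_W}[f] = \E_{X_{-i}}\bigl[\E_{P_{W|X}}[f] - \E_{P_W}[f]\bigr] \le \E_{X_{-i}}\bigl[\mathbb{W}(P_{W|X}, P_W)\bigr].
\end{equation}
Taking the supremum over $f$ gives $\mathbb{W}(P_{W|X_i}, P_W) \le \E_{X_{-i}|X_i}\mathbb{W}(P_{W|X}, P_W)$ pointwise in $X_i$. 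Equivalently, one can glue the optimal couplings between $P_{W|X}$ and $P_W$ over $X_{-i}$ to obtain a coupling of $P_{W|X_i}$ and $P_W$ whose cost is the average.

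The second step is to take the expectation over $X_i$ and apply the tower property, which yields $\E_{X_i}\mathbb{W}(P_{W|X_i},P_W) \le \E_X \mathbb{W}(P_{W|X},P_W)$. Averaging over $i\in[n]$ then gives the second displayed inequality trivially, since the right-hand side does not depend on $i$.

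The only delicate point is the regularity. We need regular conditional distributions to exist (for instance, $\mathcal{W}$ Polish), $P_W$ must have finite first moment so that all distances are finite, and we need measurability of $x\mapsto \mathbb{W}(P_{W|X=x},P_W)$ so that the expectations and the exchange of supremum and integral are justified. The duality route avoids any measurable-selection issue for the optimal couplings, because we bound each fixed $f$ before taking the supremum. Measurability of the Wasserstein map follows from lower semicontinuity of $\mathbb{W}$ under weak convergence, and this is what I would cite as the "regularity assumptions".
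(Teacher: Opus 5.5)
Your proof is correct and is essentially the argument behind the paper's proof, which only cites Proposition~2 of \cite{rodriguez-galvezTighterExpectedGeneralization2022}: that result rests on the same step of writing $P_{W|X_i}$ as the mixture $\E[P_{W|X}\mid X_i]$ and applying convexity of $\mathbb{W}(\cdot,P_W)$ with Jensen's inequality. Deriving that convexity from Kantorovich--Rubinstein duality makes your version self-contained, and you are also right that independence of the $X_j$ is never used.
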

\begin{proof}
    This Proposition is a restatement of Proposition 2 of \cite{rodriguez-galvezTighterExpectedGeneralization2022} (Section D.1.1).
\end{proof}

\begin{proof}[Proof of corollary \ref{corollary:wang_implication}]
    The decomposition in \cite{wangGeneralizationFederatedLearning2025} is equivalent to our setting when $L=2$, $n_1 = n$, $n_2 = K$. In their notation
    the supersample is denoted by $\tilde Z$ and indexed using two control variables $V_i$ (equivalent to our $U_1^{i}$) and $U^{i, V_i}_{j}$ (equivalent to our $U_2^{i, j}$).

    Their supersample has one extra index because they average the information metric over the whole ghost subtree, not only the intermediate node.
    We split the proof for the \textbf{participation gap} and the \textbf{out-of-sample gap}.

    \textbf{Out-of-sample gap:} We begin by applying the definition of total variation distance and the Pinsker inequality to get:
    \begin{align}
            \sum_{i=1}^{K} \sum_{j=1}^n  \frac{2}{nK} \left| \mathbb{E}_{\tilde \mu_2^{i, j}, U_2^{i, j}} \mathbb{W} \left( P_{W|\tilde \mu_2^{i, j}, U_2^{i, j}}, P_{W|\tilde \mu_2^{i, j}} \right) \right| &\leq \\
            \frac{1}{nK} \sum_{i=1}^{K} \sum_{j=1}^n \E_{\tilde \mu_2^{i, j}, U_2^{i, j}} \sqrt{ 2 D_{KL} \left( P_{W|\tilde \mu_2^{i, j}, U_2^{i, j}} \| P_{W|\tilde \mu_2^{i, j}} \right) } &\leq \\
            \frac{1}{nK} \sum_{i=1}^{K} \sum_{j=1}^n \E_{\tilde \mu_2^{i, j}} \sqrt{ 2 I(W, U_2^{i,j} \mid \tilde \mu_2^{i, j}) }.
    \end{align}

    Now we observe that $\tilde \mu_2^{i, j}$ corresponds to the pair $(\tilde Z ^ i , V_i)$ of \cite{wangGeneralizationFederatedLearning2025}.

    \textbf{Participation gap:} 
    Starting from Equation \eqref{eq:proof_wasserstein_abstract_before_kr_duality} we want to wrap everything in the 
    realization of an extra ghost sample. In our setting we only have the ghost of the same layer sampling, but we do not
    have the whole realization of the path going through the $\bar U$ branch.
    In their notation, the pair $(\tilde Z^i, U_i)$ corresponds to two realization of the end leaves, one per each branch controlled by $V_i$.
    Then, we can apply the law of total expectation to the generalization gap to get:
    
    \begin{align}
        \E_{\tilde\mu, U} \left[ \E_{W} \E_{\mu^\text{test}_L \sim \mu_{l, \bar U_{l}^{i_{1:l}}}^{i_{1:l}}} \ell(W, \mu^\text{test}_L) - \E_{W'}\E_{\mu^\text{test}_L \sim \mu_{l, \bar U_{l}^{i_{1:l}}}^{i_{1:l}}} \ell(W', \mu^\text{test}_L) \right] &= \\
        \E_{\tilde\mu_l^{i_{1:l}}, U_l^{i_{1:l}}} \E_{\mu_L^{i_{1:l}} \sim \tilde\mu_{l, U_{l}^{i_{1:l}}}^{i_{1:l}}} \E_{\bar\mu_L^{i_{1:l}} \sim \tilde\mu_{l, \bar U_{l}^{i_{1:l}}}^{i_{1:l}}} \left[ \E_{W} \E_{\mu^\text{test}_L \sim \mu_{l, \bar U_{l}^{i_{1:l}}}^{i_{1:l}}} \ell(W, \mu^\text{test}_L) - \E_{W'}\E_{\mu^\text{test}_L \sim \mu_{l, \bar U_{l}^{i_{1:l}}}^{i_{1:l}}} \ell(W', \mu^\text{test}_L) \right],
    \end{align}
    where $W$ and $W'$ are trained respectively on $\mu_L^{i_{1:l}}$ and $\bar\mu_L^{i_{1:l}}$, which are the two branches coming from the supersample at layer $l$ up to the last layer $L$.
    We can then apply the Kantorovich-Rubinstein duality to get:
    \begin{align}
        \E_{\tilde\mu_l^{i_{1:l}}, U_l^{i_{1:l}}} \E_{\mu_L^{i_{1:l}} \sim \tilde\mu_{l, U_{l}^{i_{1:l}}}^{i_{1:l}}} \E_{\bar\mu_L^{i_{1:l}} \sim \tilde\mu_{l, \bar U_{l}^{i_{1:l}}}^{i_{1:l}}} \left[ \E_{W} \E_{\mu^\text{test}_L \sim \mu_{l, \bar U_{l}^{i_{1:l}}}^{i_{1:l}}} \ell(W, \mu^\text{test}_L) - \E_{W'}\E_{\mu^\text{test}_L \sim \mu_{l, \bar U_{l}^{i_{1:l}}}^{i_{1:l}}} \ell(W', \mu^\text{test}_L) \right] &\leq \\
        L \E_{\tilde\mu_l^{i_{1:l}}, U_l^{i_{1:l}}} \E_{\mu_L^{i_{1:l}} \sim \tilde\mu_{l, U_{l}^{i_{1:l}}}^{i_{1:l}}} \E_{\bar\mu_L^{i_{1:l}} \sim \tilde\mu_{l, \bar U_{l}^{i_{1:l}}}^{i_{1:l}}} \left[ \mathbb{W} \left( P_{W \mid \tilde\mu_l^{i_{1:l}}, U_l^{i_{1:l}}, \mu_L^{i_{1:l}}, \bar\mu_L^{i_{1:l}}}, P_{W \mid \tilde\mu_l^{i_{1:l}}, \mu_L^{i_{1:l}}, \bar\mu_L^{i_{1:l}}} \right) \right] &= \\
        L \E_{\tilde\mu_l^{i_{1:l}}, U_l^{i_{1:l}}} \E_{\mu_L^{i_{1:l}} \sim \tilde\mu_{l, U_{l}^{i_{1:l}}}^{i_{1:l}}} \E_{\bar\mu_L^{i_{1:l}} \sim \tilde\mu_{l, \bar U_{l}^{i_{1:l}}}^{i_{1:l}}} \left[ \mathbb{W} \left( P_{W \mid U_l^{i_{1:l}}, \mu_L^{i_{1:l}}, \bar\mu_L^{i_{1:l}}}, P_{W \mid \mu_L^{i_{1:l}}, \bar\mu_L^{i_{1:l}}} \right) \right], \\
    \end{align}
    where the last equation holds because of Proposition \ref{prop:conditioning_increases_wasserstein}.
    Replacing the notation from \cite{wangGeneralizationFederatedLearning2025}, specializing the result to $l=1$, $L=2$ and adding the other equal term we get:
    \begin{align}
        2L \E_{\tilde\mu_1^{i}, V_i} \E_{(\tilde Z^i, U_i)} \left[ \mathbb{W} \left( P_{W \mid V_i, (\tilde Z^i, U_i)}, P_{W \mid (\tilde Z^i, U_i)} \right) \right] &\leq \\
        2L \E_{V_i} \E_{(\tilde Z^i, U_i)} \left[ \mathbb{W} \left( P_{W \mid V_i, (\tilde Z^i, U_i)}, P_{W \mid (\tilde Z^i, U_i)} \right) \right] &\leq \\
        \E_{(\tilde Z^i, U_i)} \left[ \sqrt{ 2 I(W, V_i \mid \tilde Z^i, U_i) } \right],
    \end{align}
    which is the desired result.
\end{proof}

\begin{proof}[Proof of corollary \ref{corollary:kavian_implication}]
    Assume that at the first layer the sampling of $\mu_1^i$ depends on $i$ and is deterministic. Up to propagating this index down the tree,
    both the setting and wasserstein bound still hold, and, when $L=2$, we can recover the same setting of \cite{kavianHeterogeneityMattersEven2025}.
    We now show that our bound is strictly tighter. First we observe that at $l=1$ the generalization error is zero, as there is no randomness in the sampling.
    Passing to their notation we have that our bound
    \begin{equation}
        \frac{2L}{N_2} \sum_{k} \sum_{j} \E_{\tilde \mu_2^{k, j}, U_2^{k, j}} \left[ \mathbb{W} \left( P_{W|\tilde \mu_2^{k, j}, U_2^{k, j}}, P_{W|\tilde \mu_2^{k, j}} \right) \right]
    \end{equation}
    can be written in their notation as follows:
    \begin{equation}
        \frac{2 \text{ Lipschitz Constant}}{Kn} \sum_{k=1}^K \sum_{j=1}^n \E \left[ \mathbb{W} \left( P_{W|\tilde Z_{k, j}, J_{k,j}}, P_{W|\tilde Z_{k, j}} \right) \right].
    \end{equation}
    Assuming that the loss is bounded in $[0, 1]$ we can use the discrete metric and the Pinsker inequality (as done in the proof of Corollary \ref{corollary:wang_implication}) to get:
    \begin{align}
        \frac{1}{Kn} \sum_{k=1}^K \sum_{j=1}^n \E \left[ \sqrt{ 2 D_{KL} \left( P_{W|\tilde Z_{k, j}, J_{k,j}} \| P_{W|\tilde Z_{k, j}} \right) } \right] &\leq \\
        \frac{1}{Kn} \sum_{k=1}^K \sum_{j=1}^n \E \left[ \sqrt{ 2 I(W, J_{k,j} \mid \tilde Z_{k, j}) } \right] &\leq \\
        \frac{1}{K} \sum_{k=1}^K \sqrt{ \frac{2}{n} \sum_{j=1}^n I(W, J_{k,j} \mid \tilde Z_{k, j}) },
    \end{align}
    where we have applied Jensen's inequality twice and the definition of conditional mutual information.
    We now want to drop the dependence on $j$ in the mutual information term, which can be done in two steps:
    \begin{align}
        I(W, J_{k,j} \mid \tilde Z_{k, j}) &= H(J_{k,j} \mid \tilde Z_{k, j}) - H(J_{k,j} \mid W, \tilde Z_{k, j}) \\
        &\leq H(J_{k,j} \mid \tilde Z_{k}) - H(J_{k,j} \mid W, \tilde Z_{k}) \\
        &= I(W, J_{k,j} \mid \tilde Z_{k}),
    \end{align}
    which holds because $J_{k,j}$ is an independent Bernoulli variable and by the data processing inequality.
    Similarly:
    \begin{align}
        \sum_{j=1}^n I(W, J_{k,j} \mid \tilde Z_{k}) &= \sum_{j=1}^n H(J_{k,j} \mid \tilde Z_{k}) - H(J_{k,j} \mid W, \tilde Z_{k}) \\
        &\leq \sum_{j=1}^n H(J_{k, j} \mid \tilde Z_{k}, J_{k, 1:j-1}) - H(J_{k,j} \mid W, \tilde Z_{k}, J_{k, 1:j-1}) \\
        &= H(J_{k} \mid \tilde Z_{k}) - H(J_{k} \mid W, \tilde Z_{k}) \\ 
        &= I(W, J_{k} \mid \tilde Z_{k}),
    \end{align}
    where we have applied the chain rule for entropy and the data processing inequality.
    Substituting back we get:
    \begin{align}
        \frac{1}{K} \sum_{k=1}^K \sqrt{ \frac{2}{n} \sum_{j=1}^n I(W, J_{k,j} \mid \tilde Z_{k, j}) } &\leq \frac{1}{K} \sum_{k=1}^K \sqrt{ \frac{2}{n} I(W, J_{k} \mid \tilde Z_{k}) } \\
        &\leq \sqrt{ \frac{2}{nK} \sum_{k=1}^K I(W, J_{k} \mid \tilde Z_{k}) } \\
        &\leq \sqrt{ \frac{2}{nK} \sum_{k=1}^K I(W_k, J_{k} \mid \tilde Z_{k}) }.
    \end{align}
\end{proof}

\begin{proof}[Proof of the ordering of lemma \ref{lemma:bound_with_different_conditioning}]
    For simplicity, call $X = \tilde\mu_l^{i_{1:l}}$, $Y = (\mu_L^{i_{1:l}}, \bar\mu_L^{i_{1:l}})$ and $U = U_l^{i_{1:l}}$. We have that $U$ is independent of $X$ and $Y$ and that 
    the following Markov chain holds: $X - (Y, U) - W$. 
    We start by expanding the following quantities:
    \begin{align}
        I(U;W,Y|X) &= I(U;Y|X) + I(U;W|X,Y) = I(U;W|X,Y)\\
                   &= I(U;W|X) + I(U;Y|X,W), \\
    \end{align}
    and
    \begin{align}
        I(U;W,X|Y) &= I(U;X|Y) + I(U;W|X,Y) = I(U;W|X,Y) \\
                   &= I(U;W|Y) + I(U;X|Y,W). \\
    \end{align}
    We have used the independence of $U$ and the chain rule for the conditional mutual information to obtain the equalities. We notice that we have two expression for $I(U;W|X,Y)$, so we can write:
    \begin{equation}
        I(U;W|X) + I(U;Y|X,W) = I(U;W|Y) + I(U;X|Y,W).
    \end{equation}
    We observe that $I(U;X|Y,W)=0$ because of the following factorization:
    \begin{equation}
        P(U, X | Y, W) = \frac{P(W|Y, U, X) P(Y|U,X) P(U)P(X)}{P(W,Y)} = \frac{P(W|Y, U) P(Y|X) P(U)P(X)}{P(W,Y)} ,
    \end{equation}
    which shows that $U$ and $X$ are independent given $Y$ and $W$. Therefore we get:
    \begin{equation}
        I(U;W|X) + I(U;Y|X,W) = I(U;W|Y),
    \end{equation}
    and since the CMI is non-negative we get the desired result:
    \begin{equation}
        I(W; U_l^{i_{1:l}} | \tilde\mu_l^{i_{1:l}}) \leq I(W; U_l^{i_{1:l}} | \mu_L^{i_{1:l}}, \bar\mu_L^{i_{1:l}}) .
    \end{equation}
    The last step is the only inequality we have used, therefore the relaxation of the two bounds only differs by $I(U;Y|X,W)$.
\end{proof}

\section{Computations for the Gaussian Location Model}

    We start by observing that:
    \begin{equation}
        \mu_L^{i_{1:L}} = \theta + \sum_{l=1}^L Z_l^{i_{1:l}}.
    \end{equation}
    Where $Z_l^{i_{1:l}} \sim \mathcal{N}(0, \sigma^2_l)$ are independent Gaussian noise variables.
    The hypothesis is given by $W = \theta + \sum_{l=1}^L \frac{1}{N_l} \sum_{i_l} Z_l^{i_{1:l}}$.
    The test point is an independent leaf node sampled as $\mu^\text{test}_L \sim \mathcal{N}(\theta, \sum_l^L \sigma^2_l)$.
    The population risk is then simply $| W - \mu^\text{test}_L|$, which is the absolute value of a Gaussian variable with variance:
    \begin{equation}
        \text{Var}(W - \mu^\text{test}_L) = \text{Var}(W) + \text{Var}(\mu^\text{test}_L) = V + \Delta,
    \end{equation}
    where $\Delta = \sum_{l=1}^L \frac{\sigma^2_l}{N_l}$ and $V = \sum_{l=1}^L \sigma^2_l$.
    We then get that $\E[ |L(W, \mu^\text{test}_L) | ] = \sqrt{ \frac{2}{\pi} } \sqrt{\Delta + V}$.
    For the empirical risk we need to compute the expectation of $|W - \mu_L^{i_{1:L}}|$. The inner part is a Gaussian variable and we can compute by observing that:
    \begin{equation}
        W - \mu_L^{i_{1:L}} = \sum_l^L \left( \bar Z_l - Z_l^{i_{1:l}} \right),
    \end{equation}
    where $\bar Z_l = \frac{1}{N_l} \sum_{i_l} Z_l^{i_{1:l}}$.
    Then:
    \begin{align}
        \text{Var}(\bar Z_l - Z_l^{i_{1:l}}) &= \text{Var}(\bar Z_l) + \text{Var}(Z_l^{i_{1:l}}) - 2 \text{Cov}(\bar Z_l, Z_l^{i_{1:l}}) \\
        &= \frac{\sigma^2_l}{N_l} + \sigma^2_l - 2 \frac{\sigma^2_l}{N_l} = \sigma^2_l - \frac{\sigma^2_l}{N_l}.
    \end{align}
    Therefore by summing over $l$ we get $\E[ |\hat L(W, \mu_L^{i_{1:L}})| ] = \sqrt{ \frac{2}{\pi} } \sqrt{V - \Delta}$ and the generalization error is:
    \begin{equation}
        \E gen(W, \mu_L) =\sqrt{ \frac{2}{\pi} } \left( \sqrt{V + \Delta} - \sqrt{V - \Delta} \right).
    \end{equation}

    For the Wasserstein bound, starting from the bound of theorem \ref{thm:abstract_gen_bound_wasserstein} we first notice that our loss 
    function has Lipschitz constant equal to 1.
    We now aim to compute:
    \begin{equation}
        \mathbb{W} \left( P_{W|\tilde \mu_l^{i_{1:l}}, U_l^{i_{1:l}}}, P_{W|\tilde \mu_l^{i_{1:l}}} \right) .
    \end{equation}
    
    Conditional on $\tilde\mu_l^{i_{1:l}}$ and $U_l^{i_{1:l}}=u$ we can write:
    \begin{equation}
        \label{eq:proof_gml_starting_wasserstein}
        W = \frac{1}{N_l} \left( \left( \sum_{i'_{1:l} \neq i_{1:l}} \frac{N_l}{N_L} \sum_{i'_{l+1:L}} \mu_L^{i'_{1:L}} \right) + \frac{N_l}{N_L} \sum_{i'_{l+1:L}} \mu_L^{i_{1:l}, i'_{l+1:L}}  \right),
    \end{equation}
    where $\mu_L^{i_{1:l}, i'_{l+1:L}} \sim \tilde\mu_{l,u}^{i_{1:l}}$ are the leaf nodes of the subtree controlled by $U_l^{i_{1:l}}$.
    Basically, we have taken out of the summation the subtree that is controlled by $U_l^{i_{1:l}}$.
    The first part of the summation does not depend on $u$ and we can just call it $A$. We observe that $W$ is the sum of Gaussian variables, and is therefore a Gaussian. Computing its expected value:
    \begin{equation}
        \E[W | \tilde\mu_l^{i_{1:l}}, U_l^{i_{1:l}}=u] = \E[A] + \frac{1}{N_L} \sum_{i'_{l+1:L}} \E[\mu_L^{i'_{1:L}} | \tilde\mu_{l,u}^{i_{1:l}}] = \E[A] + \frac{\tilde\mu_{l, u}^{i_{1:l}}}{N_l}.
    \end{equation}
    We then notice that $P_{W|\tilde \mu_l^{i_{1:l}}}$ is a mixture of $P_{W|\tilde \mu_l^{i_{1:l}}, U_l^{i_{1:l}}=u}$ for $u=1, 2$, and using that:
    \begin{align}
        \mathbb{W} (\frac{1}{2} P_1 + \frac{1}{2} P_2, P_1) &\leq \frac{1}{2} \mathbb{W}(P_1, P_1) + \frac{1}{2} \mathbb{W}(P_2, P_1) = \frac{1}{2} \mathbb{W}(P_2, P_1), \\
    \end{align}  
    combined with the fact that the Wasserstein distance between two Gaussians is the distance between their means, we get:
    \begin{align}
        \mathbb{W} \left( P_{W|\tilde \mu_l^{i_{1:l}}, U_l^{i_{1:l}}}, P_{W|\tilde \mu_l^{i_{1:l}}} \right) &\leq \frac{1}{2} \left| \E[W | \tilde\mu_l^{i_{1:l}}, U_l^{i_{1:l}}=1] - \E[W | \tilde\mu_l^{i_{1:l}}, U_l^{i_{1:l}}=2] \right| \\
        &= \frac{1}{2} \left| \frac{\tilde\mu_{l, 1}^{i_{1:l}}}{N_l} - \frac{\tilde\mu_{l, 2}^{i_{1:l}}}{N_l} \right| \\
        &= \frac{1}{2 N_l} \left| \tilde\mu_{l, 1}^{i_{1:l}} - \tilde\mu_{l, 2}^{i_{1:l}} \right| .
    \end{align}
    Given that $\tilde\mu_{l, 1}^{i_{1:l}}$ and $\tilde\mu_{l, 2}^{i_{1:l}}$ are independently sampled from their ancestor, they have the same mean and variance equal to $\sigma^2_{l}$,
    so we have $\tilde\mu_{l, 1}^{i_{1:l}} - \tilde\mu_{l, 2}^{i_{1:l}} \sim \mathcal{N}(0, 2\sigma^2_l)$.
    Therefore, taking the expected value we get:
    \begin{equation}
        \E_{\tilde \mu_l^{i_{1:l}}, U_l^{i_{1:l}}} \left[ \mathbb{W} \left( P_{W|\tilde \mu_l^{i_{1:l}}, U_l^{i_{1:l}}}, P_{W|\tilde \mu_l^{i_{1:l}}} \right) \right] \leq \frac{1}{2 N_l} \E \left[ \left| \tilde\mu_{l, 1}^{i_{1:l}} - \tilde\mu_{l, 2}^{i_{1:l}} \right| \right] = \frac{1}{2N_l} \sqrt{ \frac{4\sigma^2_l}{\pi} } = \frac{\sigma_l}{\sqrt{\pi} N_l}.
    \end{equation}

\begin{proof}
    Proof of the equivalence of the bounds using theorem \ref{thm:abstract_gen_bound_wasserstein} and lemma \ref{lemma:bound_with_different_conditioning} for the GLM.
    The two bounds are fundamentally very similar, starting from equation \ref{eq:proof_gml_starting_wasserstein}, we need to assume that the second term in the parenthesis
    is fixed, as we condition on that. We can decompose the terms as sum of each layer noise similarly to the approach for the true generalization error, so we get:
    \begin{align}
        W &= \frac{1}{N_l} \left( \left( \sum_{i'_{1:l} \neq i_{1:l}} \frac{N_l}{N_L} \sum_{i'_{l+1:L}} \mu_L^{i'_{1:L}} \right) + \frac{N_l}{N_L} \sum_{i'_{l+1:L}} \mu_L^{i_{1:l}, i'_{l+1:L}}  \right) \\
        &= \frac{A}{N_L} + \left( \sum_{j_{l+1:L}} \sum_{k=l+1}^L Z_k^{i_{1:l},j_{l+1:L}}  \right) + \frac{\tilde\mu_{l,u}^{i_{1:l}}}{N_l}. \\
    \end{align}
    The law of $W$ is then a Gaussian for both $u=1,2$, and the term in the middle has expecation zero. Therefore, with the same chain of inequalities
    and theorems above, we can get that the Wasserstein distance is upper bounded by:
    \begin{equation}
        \mathbb{W} \le \frac{1}{2 N_l} \left| \tilde\mu_{l, 1}^{i_{1:l}} - \tilde\mu_{l, 2}^{i_{1:l}} \right| ,
    \end{equation}
    which is identical to the bound from theorem \ref{thm:abstract_gen_bound_wasserstein}. We then only have to take the expected value and perform the same computations.
\end{proof}

\section{Differential Privacy Bounds}

\begin{proof}[Proof of theorem \ref{thm:generalization_bound_privacy_assumption}]
    Starting from the mutual information bound from Corollary \ref{corollary:mutual_information_bound}, we derive:
    \begin{align}
        \mathbb{E}_{W, \mu_L} gen(W, \mu_L) \leq 2 &\sum_{l=1}^L \frac{1}{N_l} \sum_{i_{1:l}}  \E_{\tilde \mu_l^{i_{1:l}}} \sqrt{I(W; U_l^{i_{1:l}} | \tilde \mu_l^{i_{1:l}})}  \\
        &\leq 2 \sum_{l=1}^L \frac{1}{N_l}  \sum_{i_{1:l}} \sqrt{ \E_{\tilde \mu_l^{i_{1:l}}} I(W; U_l^{i_{1:l}} | \tilde \mu_l^{i_{1:l}})}  \\
        &\leq 2 \sum_{l=1}^L \frac{1}{N_l}  \sum_{i_{1:l}} \sqrt{ I(W; U_l^{i_{1:l}}, \tilde \mu_l^{i_{1:l}}) - I(W; \tilde \mu_l^{i_{1:l}}) }  \\
        &\leq 2 \sum_{l=1}^L \frac{1}{N_l}  \sum_{i_{1:l}} \sqrt{ I(W; U_l^{i_{1:l}}, \tilde \mu_l^{i_{1:l}}) }  \\
        &\leq 2 \sum_{l=1}^L \frac{1}{N_l}  \sum_{i_{1:l}} \sqrt{ I(W^{i_{1:l-1}}_{l-1}; \mu_l^{i_{1:l}}) }  \\
        &\leq 2 \sum_{l=1}^L \frac{1}{N_l}  \sum_{i_{1:l}} \sqrt{ I(W^{i_{1:l-1}}_{l-1}; W_l^{i_{1:l}}) }  \\
        &\leq 2 \sum_{l=1}^L \frac{1}{N_l}  \sum_{i_{1:l}} \sqrt{ \min( \epsilon_l, \epsilon_l (e^{\epsilon_l} - 1) ) }. 
    \end{align}
    We have applied in order: Jensen's inequality, the definition of CMI, the DPI twice with the Markov property and then the privacy assumption.
    The last step follows from \cite{wangGeneralizationFederatedLearning2025}[Lemma 3.2] which states that $I(W, \mu_i) \leq \min(\epsilon, \epsilon (e^{\epsilon} - 1))$ for an $\epsilon$-differentially private algorithm.
\end{proof}

\end{document}